\documentclass[11pt,a4paper]{article}

\usepackage[table]{xcolor}  %

\usepackage[acceptedWithA]{tacl2021v1}

\usepackage{times,latexsym}
\usepackage{url}
\usepackage[T1]{fontenc}
\usepackage{xspace,mfirstuc,tabulary}

\newif\iftaclinstructions
\taclinstructionsfalse %
\iftaclinstructions

\newcommand{\instr}
\fi

\iftaclpubformat %

\else

\fi

\usepackage[T1]{fontenc}    %
\usepackage{hyperref}       %
\usepackage{url}            %
\usepackage{booktabs}       %
\usepackage{amsfonts}       %
\usepackage{nicefrac}       %
\usepackage{microtype}      %
\usepackage{lipsum}     %
\usepackage{graphicx}
\usepackage{mathtools}
\usepackage{graphbox}
\usepackage{doi}
\usepackage{amsmath,amssymb}
\usepackage{thmtools}
\usepackage{thm-restate}

\usepackage{caption}
\usepackage{amssymb}
\usepackage{amsmath}

\usepackage{bm} %
\usepackage{cleveref}
\hypersetup{
    colorlinks,
    linkcolor={red!50!black},
    citecolor={blue!50!black},
    urlcolor={blue!80!black}
}
\crefname{example}{example}{examples}
\Crefname{example}{Example}{Examples}
\DeclareCaptionType{example}[Example][List of examples]
\usepackage{marginnote}

\usepackage{amsthm}

\usepackage{fancyvrb}

\usepackage{algorithm}
\usepackage[noEnd]{algpseudocodex}
\usepackage{etoolbox}

\usepackage{subcaption}
\expandafter\def\csname ver@subfig.sty\endcsname{}

\usepackage{caption}

\usepackage[most]{tcolorbox}
\usepackage{todonotes}

\definecolor{TodoColor}{rgb}{1,0.7,0.6}

\newtheorem{theorem}{Theorem}[section]

\newtheorem{lemma}[theorem]{Lemma}
\newtheorem{remark}[theorem]{Remark}
\newtheorem{proposition}[theorem]{Proposition}

\makeatletter
\newcommand\IfRestateTF{%
  \ifx\label\thmt@gobble@label %
    \expandafter\@firstoftwo
  \else
    \expandafter\@secondoftwo
  \fi
}
\makeatother

\DeclareMathOperator*{\argmax}{arg\,max}

\makeatletter
\renewenvironment{proof}[1][\proofname]{\par
  \vspace{-\topsep}%
  \pushQED{\qed}%
  \normalfont
  \topsep0pt \partopsep0pt %
  \trivlist
  \item[\hskip\labelsep
        \itshape
    #1\@addpunct{.}]\ignorespaces
}{%
  \popQED\endtrivlist\@endpefalse
  \addvspace{12pt plus 6pt} %
}
\makeatother

\newcounter{mydefinition}
\newcommand{\mydefinition}[1]{
    \refstepcounter{mydefinition}
    \paragraph{Definition \thesection.\themydefinition: #1}
}

\DeclareUnicodeCharacter{0301}{circlemarker}

\title{Tokenization as Finite-State Transduction}

\author{
  Marco Cognetta
  \\
  Tokyo Institute of Technology \\
  \texttt{cognetta.marco@gmail.com}
  \And
  Naoaki Okazaki
  \\
  Tokyo Institute of Technology
  \\
  \texttt{okazaki@c.titech.ac.jp}
}

\date{}

\begin{document}
\maketitle
\begin{abstract}
Tokenization is the first step in modern neural language model pipelines where an input text is converted to a sequence of \textit{subword} tokens.
We introduce from first principles a finite-state transduction framework which can efficiently encode all possible tokenizations of a regular language.
We then constructively show that Byte-Pair Encoding (BPE) and MaxMatch (WordPiece), two popular tokenization schemes, fit within this framework.
For BPE, this is particularly surprising given its resemblance to context-free grammar and the fact that it does not tokenize strings from left to right.

An application of this is to \textit{guided generation}, where the outputs of a language model are constrained to match some pattern. Here, patterns are encoded at the character level, which creates a mismatch between the constraints and the model's subword vocabulary. While past work has focused only on constraining outputs without regard to the underlying tokenization algorithm, our framework allows for simultaneously constraining the model outputs to match a specified pattern while also adhering to the underlying tokenizer's canonical tokenization.
\end{abstract}

\section{Introduction}

Modern NLP models typically operate on \textit{subword} tokens, an intermediate granularity between characters and words, which allows for efficiently encoding any input sequence with a finite vocabulary. Recently, using finite-state transducers to produce a regular language of \textit{subword} sequences that match a given regular language of \textit{character} sequences (i.e., the concatenation of the subword sequence would match the character-level pattern) has found an application to \textit{guided generation} for LLMs. However, these subword-level patterns are typically unaware of any underlying tokenization scheme. Here, we consider the problem of generating subword-level regular languages that simultaneously match a character-level regular language, but also only admit subword sequences that would have been produced by a tokenization algorithm such as MaxMatch or Byte-Pair Encoding (BPE).

Our contributions are a clear explanation of how one would construct a subword-level automaton from first principles via automata theory. We first introduce a character-to-subword transducer, which allows us to promote character-level patterns to subword-level patterns (\Cref{sec:tokenization_agnostic_pattern_promotion}). 
However, these subword-level patterns do not account for an underlying tokenization scheme. We extend the framework to two popular subword tokenization algorithms so that we can simultaneously match a character-level pattern while only only allowing subword sequences which would match the output of the underlying tokenizer. First, we consider MaxMatch-style tokenizers, which is a straightforward extension in the transducer framework (\Cref{ssec:maxmatch_transducer}), and then BPE tokenizers, which, at first glance, are not obviously representable by finite-state machines (\Cref{ssec:bpe_transducer}). For BPE specifically, we show that, contrary to the naïve runtime analysis, the BPE promotion can be done in polynomial time in the size of the input pattern and subword vocabulary (Theorem \ref{thm:bpe_complexity}). 
The application of subword-level guided generation and our framework is discussed in Section \ref{sec:guided_generation}, including a theoretical shortcoming which motivates our tokenization-preserving subword-promotion construction.

\section{Tokenization}

Tokenization algorithms convert sequences of characters into sequences of tokens drawn from a subword vocabulary (e.g., \texttt{t\textvisiblespace o\textvisiblespace k\textvisiblespace e\textvisiblespace n\textvisiblespace s} $\rightarrow$ \texttt{token\textvisiblespace s}).

\subsection{Subword Vocabulary}
Given a finite set of atomic characters $\Sigma$, a subword vocabulary, $\Gamma$, is a finite set $\Sigma \subseteq \Gamma \subset \Sigma^+$. The first set inclusion property ensures that the subword is \textit{open-vocabulary} --- any sequence built from atomic characters (i.e., a sequence $w \in \Sigma^+$) can be represented by the subword vocabulary.

\subsection{MaxMatch Encoding}
MaxMatch (or WordPiece) encoding is a tokenization algorithm that, given a subword vocabulary $\Gamma$, tokenizes an input sequence \textit{greedily} from left to right.
It iteratively selects the longest possible matching token from $\Gamma$ and adds that to the tokenized output.
While a canonical training procedure exists \cite{DBLP:conf/icassp/SchusterN12}, any subword vocabulary can be used with MaxMatch.

\begin{algorithm}[H]
\begin{algorithmic}[1]
    
    \State $i \gets 0,~t \gets \langle~\rangle$
    \State $m \gets \max_{v \in \Gamma} |v|$
    
    \While{$i < |w|$}
        \State $z \gets w_i$
        \For{$j \in 1...m$}
            \If{$w_{i:i+j} \in \Gamma$}
                \State $z \gets w_{i:i+j}$
            \EndIf
        \EndFor
        \State $\textsc{Append}(t, z)$
        \State $i \gets i + |z|$
    \EndWhile
    \State \Return $t$
\end{algorithmic}

\caption{%
    \hspace{-1mm}: MaxMatch Tokenization \newline
    \textbf{Input}: Vocabulary $\Gamma$, String $w \in \Sigma^+$\\
    \textbf{Output}: Tokenized sequence $t \in \Gamma^+$
}
\label{alg:maxmatch_inference}
\end{algorithm}

Algorithm \ref{alg:maxmatch_inference} describes MaxMatch tokenization inference. The naïve implementation of Algorithm \ref{alg:maxmatch_inference} takes $O(|w|m)$ time, where $m = \max_{v \in \Gamma} |v|$ is the length of the longest token in $\Gamma$.
However, \newcite{song-etal-2021-fast} show that MaxMatch tokenization can be viewed as a variant of the Aho-Corasick algorithm \cite{AhoC75} and describe an $O(|w|)$-time tokenization algorithm. Figure \ref{fig:maxmatch_example} provides an example of MaxMatch tokenization.

\begin{figure}
\begin{center}
\resizebox{0.7\linewidth}{!}{
\begin{tabular}{c|ccccccc}
\toprule
Step & & & & & & & \\ \midrule
0. & $\texttt{b}$ & $\texttt{a}$ & $\texttt{n}$ & $\texttt{a}$ & $\texttt{n}$ & $\texttt{a}$ & $\texttt{s}$ \\
1. & \cellcolor{blue!25}$\texttt{b}$ & \cellcolor{blue!25}\texttt{a} & \cellcolor{blue!25}\texttt{n} & \cellcolor{blue!25}\texttt{a} & \texttt{n} & \texttt{a} & \texttt{s} \\
2. & \cellcolor{blue!25}$\texttt{b}$ & \cellcolor{blue!25}\texttt{a} & \cellcolor{blue!25}\texttt{n} & \cellcolor{blue!25}\texttt{a} & \cellcolor{red!25}\texttt{n} & \cellcolor{red!25}\texttt{a} & \texttt{s} \\
3. &\cellcolor{blue!25}$\texttt{b}$ & \cellcolor{blue!25}\texttt{a} & \cellcolor{blue!25}\texttt{n} & \cellcolor{blue!25}\texttt{a} & \cellcolor{red!25}\texttt{n} & \cellcolor{red!25}\texttt{a} & \cellcolor{green!25}\texttt{s} \\
\bottomrule
\end{tabular}
}
\end{center}
\caption{MaxMatch encoding of \texttt{bananas} with \\ $\Gamma = \{\texttt{a, b, n, s, ba, na, ban, bana}\}$. At each step, the longest matching token is added to the tokenized sequence to produce \texttt{bana\textvisiblespace na\textvisiblespace s}.}\label{fig:maxmatch_example}
\end{figure}

\subsection{Byte-Pair Encoding}

Byte-Pair Encoding (BPE) is a two-stage algorithm for producing subword tokenizations \cite{Gage1994ANA, sennrich-etal-2016-neural, zouhar-etal-2023-formal}. 
In this paper, we assume that we have already-trained BPE tokenizers and never use the training procedure directly. However, we include it here to show that the ordering of merges is not arbitrary. First, a subword vocabulary is formed, given a corpus $C$ and a desired vocabulary size $k$. Assume $\textsc{Count}(C, a, b)$ counts the number of occurrences of the sequence $a b$ in corpus $C$, Algorithm \ref{alg:bpe_training} produces the vocabulary and merge list for the BPE tokenizer by iteratively finding the pair $(a, b)$ with the highest cooccurence count in the corpus, merging them, adding the merge to an ordered list of merges, and adding the token $ab$ to the subword vocabulary. This repeats until the target vocabulary size is reached.

\begin{algorithm}[H]
\begin{algorithmic}[1]

    \State $\Gamma = \Sigma,~\mu = \langle~\rangle$
    \For{$i \in 1\dots k$}
        \State $(a, b) = \argmax_{(a, b) \in C} \textsc{Count}(C, a, b)$
        \State $\textsc{Append}(\mu, (a, b))$
        \State $\textsc{Append}(\Gamma, ab)$
        \State $\textsc{Merge}(C, (a, b))$
    \EndFor
    \State \Return $(\Gamma, \mu)$

\end{algorithmic}

\caption{%
    \hspace{-1mm}: BPE Training \newline
    \textbf{Input}: Corpus $C$ over alphabet $\Sigma$, Target merge size $k$\\
    \textbf{Output}: Vocabulary $\Gamma$, Merge list $\mu$
}
\label{alg:bpe_training}
\end{algorithm}
\begin{algorithm}[H]
\begin{algorithmic}[1]
    \State $t \gets w$
    \State $\psi \gets \langle (t_i, t_{i+1})~\mid~(t_{i}, t_{i+1}) \in \mu \rangle$
    \While{$\psi \ne \emptyset$}
        \State $(a, b) \gets \Call{Max}{\psi}$
        \State $t \gets \textsc{Apply}(t, (a, b))$
        \State $\psi \gets \langle (t_i, t_{i+1})~\mid~(t_{i}, t_{i+1}) \in \mu \rangle$
    \EndWhile
    \State \Return $t$
\end{algorithmic}

\caption{%
    \hspace{-1mm}: BPE Tokenization \newline
    \textbf{Input}: BPE Tokenizer $\mathcal{B} = (\Gamma, \mu)$, String $w \in \Sigma^+$\\
    \textbf{Output}: Tokenized sequence $t \in \Gamma^+$
}
\label{alg:bpe_inference}
\end{algorithm}

A trained BPE tokenizer $\mathcal{B} = (\Gamma, \mu)$ can then be used to produce a tokenized sequence from an input character sequence, according to the merge rules. Given a sequence, BPE iteratively finds the highest priority --- the lowest index in the merge list --- merge that is present in the current (partially) tokenized sequence and merges it. This is done repeatedly until no more merges are available, at which point the tokenized sequence is returned, as shown in Algorithm \ref{alg:bpe_inference}.

Figure \ref{fig:bpe_example} gives an example of each merge being iteratively applied to an input string. Notice that successive merges do not necessarily happen in left-to-right order, and that the merging procedure resembles a tree.

\begin{figure}[t]
\centering
\includegraphics[width=0.75\linewidth]{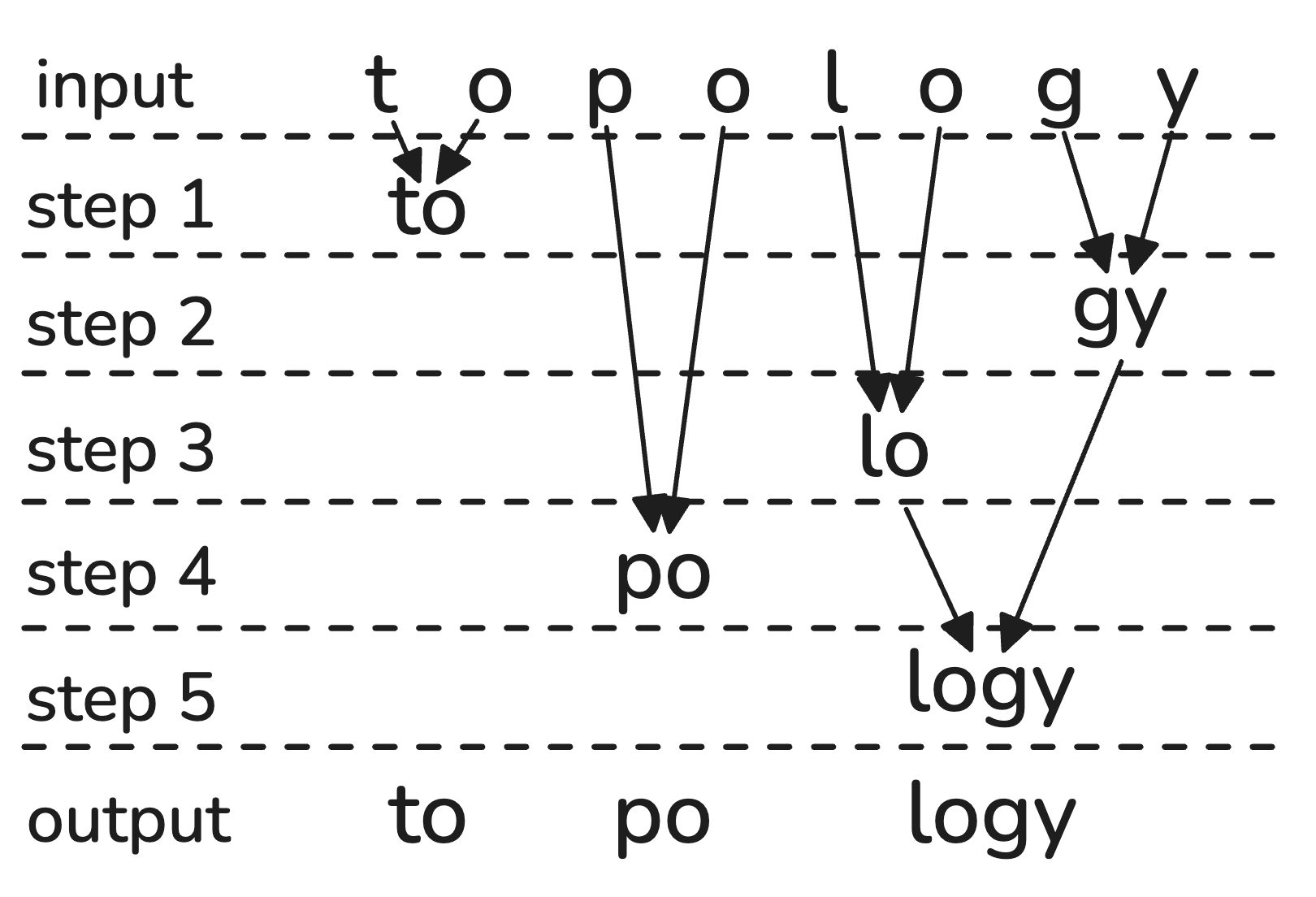}
\caption{An example of BPE tokenization given $\mu = \langle${(\texttt{t},\texttt{~o}), (\texttt{g},\texttt{~y}), (\texttt{l},\texttt{~o}), (\texttt{p},\texttt{~o}), (\texttt{lo},\texttt{~gy}) }$\rangle$. Notice that the merges are not necessarily done left to right or in order of length. The final tokenized sequence is \texttt{to\textvisiblespace po\textvisiblespace logy}.}
\label{fig:bpe_example}
\end{figure}

\section{Automata and Transducers}\label{sec:automata_and_transducers}

We introduce the fundamental concepts from automata theory that will be used in the main results of this paper. In general, we provide canonical pseudocode only for the definitions which will draw on it later in proofs, and provide only the definitions for the remaining concepts.

\mydefinition{Finite-state Transducers} are a generalization of finite automata, which instead of accepting or rejecting strings, accept or reject \textit{pairs} of strings, $(x, y)$, where $x$ is the \textit{input} string and $y$ is the \textit{output} string. In simpler terms, it accepts or rejects transformations of strings, where the input is transformed into the output.

A transducer is defined as $\mathcal{T} = (\Sigma, \Gamma, Q, q_\mathrm{start}, F, \delta)$ where $\Sigma$ and $\Gamma$ are the finite \textit{input} and \textit{output} vocabularies, $Q$ is a set of states, $q_\mathrm{start} \in Q$ is the \textit{initial} state, $F \subseteq Q$ is a set of \textit{final} states, and $\delta$ is a set of \textit{transitions} function $Q \times \Sigma \times \Gamma \times Q$ (an initial state, an input character, an output character, and a target state).
$\Sigma$ and $\Gamma$ each also implicitly contain an \textit{empty character}, $\varepsilon$.

A \textit{path} is a sequence of transitions: \[\scriptstyle \pi~=~(q_0, i_0, o_0, q_1),~(q_1, i_1, o_1, q_2),~\dots,~(q_{n-1}, i_{n-1}, o_{n-1}, q_n)\] where $(q_i, i_{i+1}, c_{i+1}, q_{i+1}) \in Q \times \Sigma \times \Gamma \times Q$. Let $s_i$ and $s_o$ be functions that concatenate the input and output words (the elements from $\Sigma$ and $\Gamma$) of a path, respectively. Then, a path is denoted $\pi_{x,y}$ if $s_i(\pi) = x$ and $s_o(\pi) = y$. A path $\pi_{x,y}$ is called \textit{accepting} if for all transitions in the path sequence, $(q_i, i_i, o_i, q_{i+1}) \in \delta$, $q_0 = q_\mathrm{start}$, and $q_n \in F$, and is called rejecting otherwise. Note that $\varepsilon$ can appear in a transition (as either the input or output character or both) but does not ``consume'' a character from the input or output strings since $\varepsilon$ is an empty character (e.g.,  $ab\varepsilon c = abc$).

We write the functional form of a transducer as $\mathcal{T}(x, y)$ for $x \in \Sigma^*$ and $y \in \Gamma^*$.
This function returns true if there exists an accepting path $\pi_{x,y}$ and false otherwise.

\mydefinition{Finite-state Automata} are a special case of transducers where $\Gamma = \Sigma$ and $\forall (q, i, o, p) \in \delta$,~$i = o$.
Consequently, automata are transducers which map strings to themselves. As such, we omit the output label from the transition definition where clear; i.e., $(q, i, p)$ rather than $(q, i, i, p)$.
An automaton is \textit{deterministic} if there are no $\varepsilon$-transitions and, for all states and for each character in the automaton's alphabet, there is at most one outgoing transition from that state labeled with that character.
Like transducers, we write $\mathcal{A}(x)$ for $x \in \Sigma^*$ as the functional form of automata, returning true if there is a valid accepting path $\pi_{x}$ and false otherwise.

For convenience, we use a functional-notation for $\delta$ when clear for deterministic automata: $\delta(q, c) \coloneq p$ if $(q, c, p) \in \delta$.
We denote the \textit{language} of a transducer, the set of (pairs of) strings accepted by it, as $L(\mathcal{T}) = \{(x, y) \mid (x, y) \in \Sigma^* \times \Gamma^* \wedge \mathcal{T}(x, y)\}$ or $L(\mathcal{T})$, respectively. Likewise, for automata, we write $L(\mathcal{A}) = \{x \mid x \in \Sigma^* \wedge \mathcal{A}(x)\}$.

\paragraph{Notational Aside}
We make use of several notational shortcuts when convenient. When referring to states, we write $q \in Q$ or $q \in \mathcal{A}$ or $\mathcal{T}$ interchangeably (for automata and transducers, respectively). Likewise, $|\mathcal{A}|$ and $|\mathcal{T}|$ are defined as the number of states in the automaton or transducer.

For transitions, we often treat $\delta$ as a (partial) function:
\begin{itemize}
    \item $\delta(q, i, o, p)$ is a transition, used interchangeably with $\delta(q, i, o) = p$ since all transducers in this paper are deterministic
    \item $\delta(q)$, is the set of transitions $\{(q, i, o, p) \mid (q, i, o, p) \in \delta\}$
\end{itemize}

\mydefinition{Label Projection.}
Suppose we have a transducer $\mathcal{T} = (\Sigma, \Gamma, Q, q_\mathrm{start}, F, \delta)$. The unary output projection function $\textsc{Proj}$ transforms the transducer into a new transducer $\mathcal{A} = (\Gamma, \Gamma, Q, q_\mathrm{start}, F, \delta'$), where $\Gamma$, $Q$, $q_\mathrm{start}$, and $F$ remain the same, but $\delta' = \{(q, o, p) \mid (q, i, o, p) \in \delta\}$. This has the effect of transforming a transducer into an automaton that computes: 
\begin{align}
\mathcal{A}(y) = \bigvee_{x \in \Sigma^*} \mathcal{T}(x, y).
\end{align}
That is, it accepts the set of all strings $y \in \Gamma^*$ which have a valid transduction from some string $x \in \Sigma^*$.

\mydefinition{Transducer Composition.}
A fundamental binary operation on transducers is composition.
Let $\mathcal{T}_1 = (\Sigma, \Xi, Q_1, q_\mathrm{start}, F_1, \delta_1)$ and $\mathcal{T}_2 = (\Xi, \Gamma, Q_2, q'_\mathrm{start}, F_2, \delta_2)$ be transducers with a shared alphabet $\Xi$ on the output side of $\mathcal{T}_1$ and the input of $\mathcal{T}_2$.
Then, composition $\circ$ produces a new transducer such that: 
\begin{align}
& \mathcal{T}_1 \circ \mathcal{T}_2(x, z) = \nonumber \\ 
& \bigvee_{y \in \Xi^*}  \forall (x, z) \in \Sigma^* \times \Gamma^*: \mathcal{T}_1(x, y) \wedge \mathcal{T}_2(y, z) \hspace{8mm}\null \nonumber  \\[-10mm]
\null
\end{align}
\vspace{0mm}

\noindent
This transducer accepts pairs of strings $(x, z) \in \Sigma^* \times \Gamma^*$ if there is an intermediate string $y \in \Xi^*$ such that $\mathcal{T}_1$ accepts $(x, y)$ and $\mathcal{T}_2$ accepts $(y, z)$.

The composition algorithm proceeds like automata intersection (indeed, intersection is just a special case of composition, where both inputs are automata). For each state $q_i, q'_j \in Q_1 \times Q_2$, the composed transducer has a state $q_{ij}$. For each $(q_i, c_i, c_o, p_j) \in \delta_1$ and $(q'_k, c'_i, c'_o, p'_l) \in \delta_2$ such that $c_o = c'_i$, there is a transition $(q_{ij}, c_i, c'_o, p_{jl})$. The initial state is  $(q_\mathrm{start}, q'_\mathrm{start})$ and the final states are all $q_{ij}$ such that $(q_i, q'_j) \in F_1 \times F_2$.

\Cref{alg:generic_composition} gives the pseudocode for generic, unweighted transducer composition\footnote{In general, generic composition is restricted to $\varepsilon$-free transducers, but it can work for specific transducers (over idempotent semirings and without $\varepsilon$-cycles), which is true for the class of transducers we consider.} \cite{DBLP:journals/ijfcs/AllauzenM09}. %

\begin{algorithm}
{\fontsize{9}{9}\selectfont
\begin{algorithmic}[1]
\State $Q_c \gets \{(q_\mathrm{start}, q'_\mathrm{start})\}$
\State $F_c \gets \{\}$
\State $\delta_c \gets \text{empty transition function}$
\State $\mathrm{queue} \gets \{(q_\mathrm{start}, q'_\mathrm{start})\}$
\While{$\mathrm{queue}$ is not empty}
    \State $(q, q') \gets \mathrm{queue}.\textsc{pop}()$
    \If{$(q, q') \in F_1 \times F_2$}
        \State $F_c \gets F_c \cup \{(q, q')\}$
    \EndIf
    \For{$(q, i, \xi_1, p) \in \delta_1$ and $(q', \xi_2, o, p') \in \delta_2(y)$}
        \If{$\xi_1 = \xi_2$ or $\xi_1 = \varepsilon$ or $\xi_2 = \varepsilon$} \label{line:transition_match}
        \State $Q_c \gets Q_c \cup \{(p, p')\}$
        \State $\mathrm{queue}.\textsc{enqueue}((p, p'))$
        \State $\delta_c \gets \delta_c \cup \{((q, q'), i, o, (p,p'))\}$
        \EndIf
    \EndFor
\EndWhile
\State \Return $\mathcal{T}_c = (\Sigma, \Gamma, Q_c, (q_\mathrm{start}, q'_\mathrm{start}), F_c, \delta_c)$

\end{algorithmic}
}
\caption{%
    \hspace{-1mm}: Generic Transducer Composition \newline
    \textbf{Input}: $\mathcal{T}_1 = (\Sigma, \Xi, Q_1, q_\mathrm{start}, F_1, \delta_1)$,\\
    \null\hspace{10.5mm} $\mathcal{T}_2 = (\Xi, \Gamma, Q_2, q'_\mathrm{start}, F_2, \delta_2)$ \newline
    \textbf{Output}: Composition $\mathcal{T}_1 \circ \mathcal{T}_2$
}
\label{alg:generic_composition}
\end{algorithm}

\mydefinition{$\bm{\varepsilon}$-Closure and Removal.}
Like automata, transducers are closed under the Kleene star operation, sometimes known as $\varepsilon$-closure \cite{DBLP:books/daglib/0086373}. Given a transducer  $\mathcal{T} = (\Sigma, \Gamma, Q, q_\mathrm{start}, F, \delta)$, define $L(\mathcal{T})^0 = \{(\varepsilon, \varepsilon)\}$,  $L(\mathcal{T})^1 = L(\mathcal{T})$, and $L(\mathcal{T})^2  = \{(x_1x_2, y_1y_2) \in \Sigma^* \times \Gamma^*$ such that $(x_1, y_1)$ and $(x_2, y_2) \in L(\mathcal{T})$. For $k \ge 1$, define $L(\mathcal{T})^k = \{(x_1x_2\dots x_k, y_1y_2\dots y_k) \mid \forall i \in [1, k], (x_i, y_i) \in L(\mathcal{T})\}$.
Then, the $\varepsilon$-closure of $\mathcal{T}$ is defined as $\mathcal{T}^* = \bigcup_{k=0}^{\infty} L(\mathcal{T})^k$, and is realized by adding a transition $(q, \varepsilon, \varepsilon, q_\mathrm{start})$ for each $q \in F$. 

$\varepsilon$-Removal for automata produces a new automaton with no $\varepsilon$-transitions that recognizes the same language as the original. It can be done in polynomial time to the size of the input automaton, but may introduce non-determinism \cite{DBLP:books/daglib/0086373}.

\section{Tokenization-Agnostic Pattern Promotion} \label{sec:tokenization_agnostic_pattern_promotion}

\subsection{Subword Lexicon Transducer}\label{sec:subword_transducer}

\setcounter{mydefinition}{0}

We now introduce the key ingredient to our framework: the character-to-subword lexicon transducer. This is a transducer that maps character sequences to subword units, e.g., \texttt{t\textvisiblespace o\textvisiblespace k\textvisiblespace e\textvisiblespace n} $\rightarrow$ \texttt{tok\textvisiblespace en}. The useful aspect of this transducer is that it can produce a succinct representation of \textit{all} possible subword tokenizations of a given input, according to a subword vocabulary, $\Gamma$.
That is, if $\{\texttt{tok}, \texttt{to}, \texttt{en}, \texttt{k}\} \subseteq \Gamma$, then it will encode the mappings to both \texttt{tok\textvisiblespace en} and \texttt{to\textvisiblespace k\textvisiblespace en}.

\paragraph{Building a lexicon transducer.}

A lexicon transducer recognizes the relation:

\begin{equation}\label{eq:lexicon_formulation}
\mathcal{L}( \mathcal{T}) = \bigcup\limits_{w \in \Gamma} (\texttt{w}_1\texttt{\textvisiblespace}\texttt{w}_2\texttt{\textvisiblespace} \dots\texttt{\textvisiblespace}\texttt{w}_n, w)
\end{equation}

A simple way to build a lexicon transducer is to construct a trie transducer which maps characters to subwords and then optionally minimize it. The final result is an acyclic, deterministic transducer for which each valid path encodes a unique word in the lexicon, and $\varepsilon$-closure allows it to transduce arbitrarily long sequences \cite{DBLP:conf/fsmnlp/CognettaA019}.\footnote{Algorithm \ref{alg:lexicon_construction} in Appendix \ref{apx:tokenization_agnostic_lexicon} gives a pseudocode implementation which is used in the proofs found there.}
\begin{figure*}[t]
\centering
\subcaptionbox{An input pattern automaton $\mathcal{A}$. \label{sfig:input_automaton}}
  {%
    \includegraphics[width=0.49\linewidth,align=c]{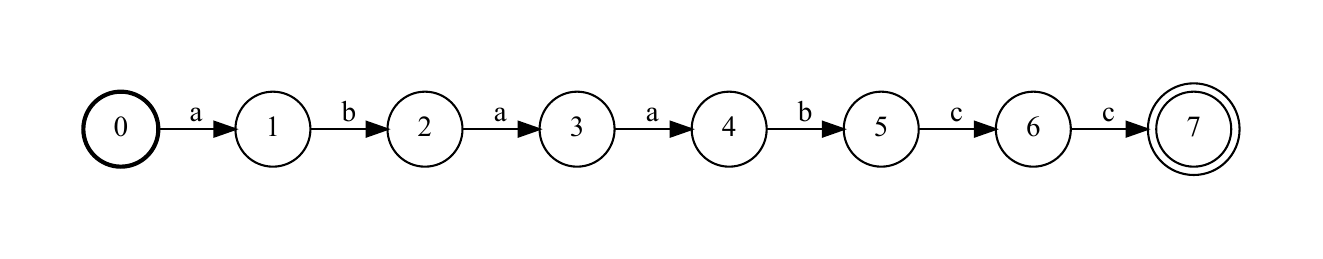}%
    \vphantom{\includegraphics[width=0.4\linewidth,align=c]{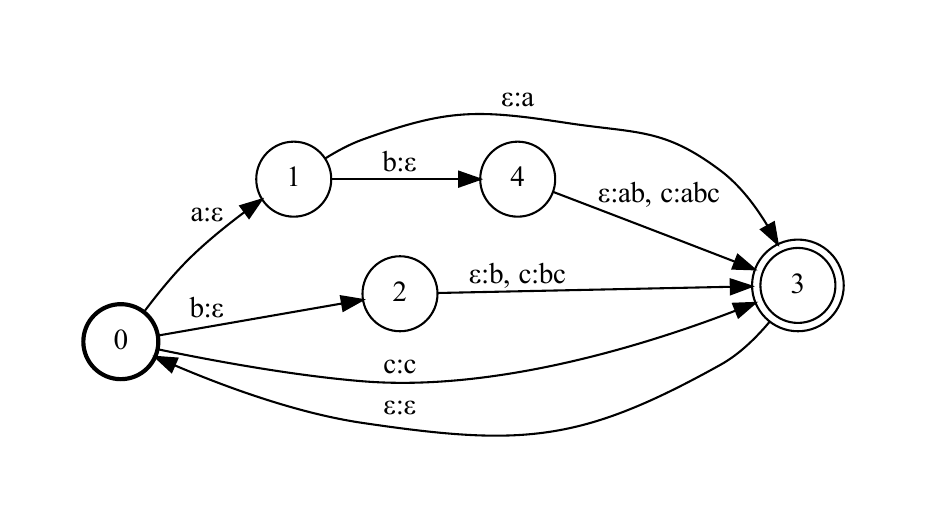}}%
  }
  \hspace{1cm}
\subcaptionbox{A character-to-subword transducer $\mathcal{T}$.\label{sfig:lexicon}}
  {\includegraphics[width=0.42\linewidth]{images/small_lexicon.pdf}}

\subcaptionbox{$\mathcal{A} \circ \mathcal{T}$ \label{sfig:composed}}
  {%
    \includegraphics[width=0.48\linewidth,align=c]{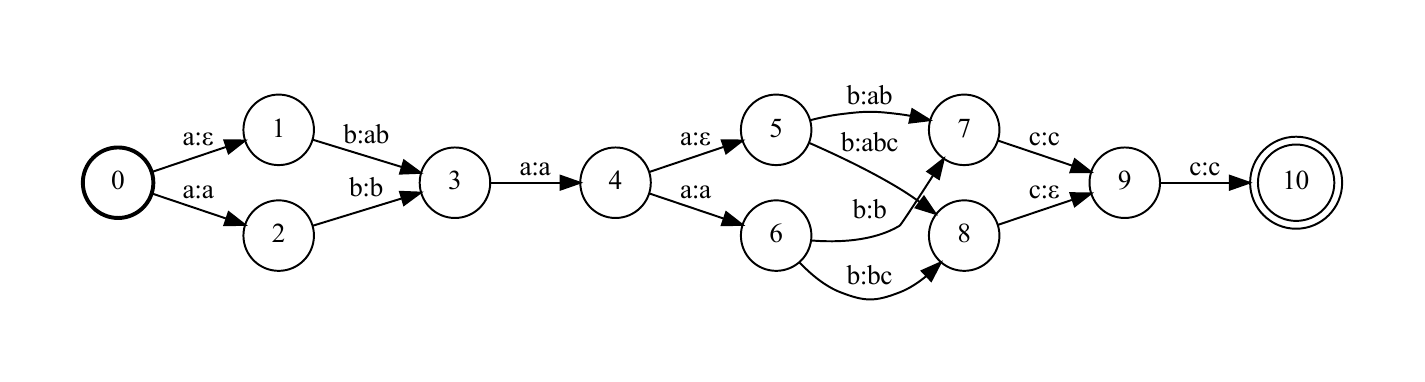}%
    \vphantom{\includegraphics[width=0.48\linewidth,align=c]{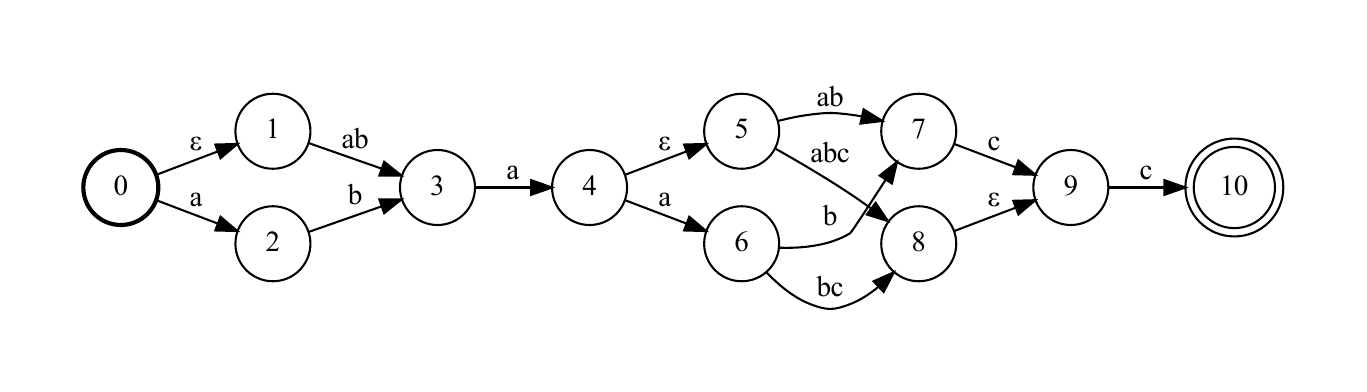}}%
  }
\subcaptionbox{$\textsc{Proj}(\mathcal{A} \circ \mathcal{T})$ \label{sfig:projected}}
  {\includegraphics[width=0.48\linewidth]{images/agnostic_composed_projected.pdf}}

\subcaptionbox{$\textsc{Min}(\textsc{Proj}(\mathcal{A} \circ \mathcal{T}))$ \label{sfig:minimized}}
  {\includegraphics[width=0.6\linewidth]{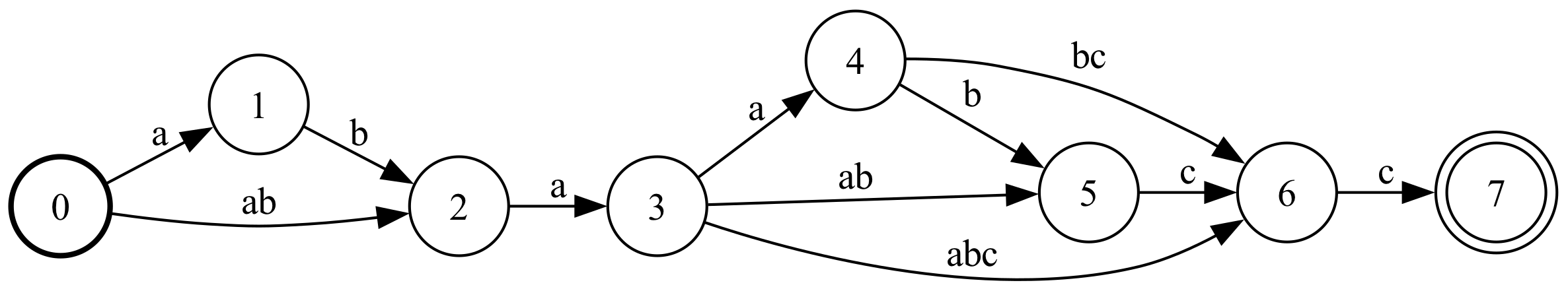}}

\caption{An example of projecting a character-level input pattern $\mathcal{A} = \texttt{abaabcc}$ to the subword level, given a subword vocabulary $\{\texttt{a, b, c, ab, abc, bc}\}$ represented by the character-to-subword transducer $\mathcal{T}$. The intermediate transducers formed during this process are shown in Figures (c) and (d), and the final, minimized subword automaton is given in Figure (e). Observe that for every accepting path in $\textsc{Min}(\textsc{Proj}(\mathcal{A} \circ \mathcal{T}))$, the concatenation of the subwords on that path satisfy the pattern in $\mathcal{A}$ when spelled out character-by-character. For example, $\texttt{ab\textvisiblespace a\textvisiblespace a\textvisiblespace bc\textvisiblespace c}$ and $\texttt{a\textvisiblespace b\textvisiblespace a\textvisiblespace abc\textvisiblespace c}$, which are accepted by $\textsc{Min}(\textsc{Proj}(\mathcal{A} \circ \mathcal{T}))$, both correspond to $\texttt{abaabcc}$, which is accepted by $\mathcal{A}$.
}
\label{fig:compose_steps}

\end{figure*}

\paragraph{Character-Level To Subword-Level Patterns.}
We now have all the pieces to construct a subword-level automaton that matches a given character-level pattern.
Let $\mathcal{T}$ be a lexicon transducer over a subword vocabulary $\Gamma$ (produced by Algorithm \ref{alg:lexicon_construction}) and let $\mathcal{A}$ be a trim, deterministic automaton over $\Sigma^*$ recognizing some given pattern.
The composition $\mathcal{A} \circ \mathcal{T}$ produces a character-sequence-to-subword-sequence transducer, for which for any path $\pi_{x, y}$, $s_i(\pi) = s_o(\pi)$.
Thus, the concatenation of the output labels on any accepting path spells out a character sequence which would have been accepted by $\mathcal{A}$. In fact, this transducer encodes \textit{all} subword sequences with this property.
We then compute:
\begin{equation}\label{eq:min_projection}
    \textsc{Min}(\textsc{Proj}(\mathcal{A} \circ \mathcal{T})),    
\end{equation} which results in a minimal, deterministic automaton recognizing subword sequences which spell out character sequences that satisfy $\mathcal{A}$. Lemma \ref{lma:proj_determinsitic} shows that a determinization step, which can take exponential time in the size of the automaton, is not necessary between the \textsc{Proj} and \textsc{Min} steps.

\begin{restatable}{lemma}{projdet}\label{lma:proj_determinsitic}
Let $\mathcal{A}$ be minimal, deterministic automaton over $\Sigma$ and $\mathcal{T}$ be a character-to-subword transducer over $\Sigma \subseteq \Gamma \subset \Sigma^+$  (Equation \ref{eq:lexicon_formulation}).
Then $\varepsilon\textsc{-Removal}(\textsc{Proj}(\mathcal{A} \circ \mathcal{T}))$ is deterministic.
\end{restatable}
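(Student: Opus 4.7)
The plan is to exploit the very restricted shape of the lexicon transducer $\mathcal{T}$. I would first pin down its structure: $\mathcal{T}$ consists of a trie of characters rooted at a start state $q_0$, whose states correspond to prefixes of subwords in $\Gamma$; each character-reading transition descends the trie while emitting $\varepsilon$, and each state $p$ that spells a complete subword $w$ carries a single output transition emitting $w$ and leading (via the $\varepsilon$-closure) back to $q_0$. Two properties matter for the rest: (a) $\mathcal{T}$ is input-deterministic, so for any character string $u$ there is at most one path in $\mathcal{T}$ from $q_0$ reading $u$; and (b) every subword-emitting transition returns to (the $\varepsilon$-closure of) $q_0$.

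These properties propagate to $\mathcal{C} = \mathcal{A} \circ \mathcal{T}$. Unfolding the composition rule, every transition of $\mathcal{C}$ has one of two forms: (i) a ``descend-and-read'' transition $((q_\mathcal{A}, p), c, \varepsilon, (\delta_\mathcal{A}(q_\mathcal{A}, c), pc))$ obtained by pairing a character transition in $\mathcal{A}$ with the corresponding trie descent in $\mathcal{T}$; or (ii) a subword-emitting transition $((q_\mathcal{A}, p), \varepsilon, w, (q_\mathcal{A}, q_0))$ coming from $\mathcal{T}$'s emission alone, so $\mathcal{A}$'s state is unchanged. After $\textsc{Proj}$, the type-(i) transitions are exactly the $\varepsilon$-transitions and the type-(ii) transitions are the labeled subword transitions; in particular, every non-$\varepsilon$ transition lands in a state whose $\mathcal{T}$-component is $q_0$.

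To finish, I would analyse the $\varepsilon$-closure rooted at any such ``post-emission'' state $(q_\mathcal{A}, q_0)$. Property (a) together with determinism of $\mathcal{A}$ implies that for each trie prefix $u$ there is a \emph{unique} type-(i) path in $\mathcal{C}$ spelling $u$, and it ends in the unique state $(\delta^{*}_\mathcal{A}(q_\mathcal{A}, u), p_u)$. A subword-emission is available from this closure state precisely when $u = w \in \Gamma$, and then it is labeled $w$ and lands in the uniquely determined state $(\delta^{*}_\mathcal{A}(q_\mathcal{A}, w), q_0)$. Since every reachable state of the $\varepsilon$-removed automaton is of this post-emission form (it is reached by a non-$\varepsilon$ transition, which lands in such a state, and the start state is also of this form), out of every reachable state the transition on any label $w \in \Gamma$ has at most one target, which is exactly determinism.

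The main obstacle is making the $\varepsilon$-closure analysis in the previous paragraph rigorous: I must exhibit a clean bijection between $\varepsilon$-closure states at $(q_\mathcal{A}, q_0)$ and trie prefixes $u$, without double-counting contributions from the $\varepsilon$-$\varepsilon$ loops introduced by the $\varepsilon$-closure of $\mathcal{T}$. This is the one place the proof genuinely combines both determinism hypotheses (of $\mathcal{A}$ and of $\mathcal{T}$). Everything else — the two-type classification of transitions, the shape of reachable states, and the uniqueness of the target — is a direct unfolding of the composition and projection definitions.
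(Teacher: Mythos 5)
Your proposal is correct and follows essentially the same route as the paper's own proof: both exploit the fact that each subword $w \in \Gamma$ labels a unique trie path in $\mathcal{T}$ emitting exactly one non-$\varepsilon$ output, so that after composition with the deterministic $\mathcal{A}$, projection, and $\varepsilon$-removal, every state of the form $(q, q_0)$ has at most one outgoing transition per label $w$. Your version is somewhat more explicit about the transition classification and the $\varepsilon$-closure bookkeeping (the one step you flag as needing rigor is also the step the paper's proof compresses into ``this can be contracted to a single transition''), but the underlying argument is identical.
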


Lemma \ref{lma:proj_determinsitic} implies $\textsc{Min}(\textsc{Proj}(\mathcal{A} \circ \mathcal{T}))$ can be computed in polynomial time in the size of $\mathcal{A}$ and $\mathcal{T}$. First, $\mathcal{A} \circ \mathcal{T}$ can be computed in polynomial time by the standard transducer composition algorithm \cite{DBLP:journals/ijfcs/AllauzenM09}. Likewise, \textsc{Proj} can be done in $O(|\delta|) \in O(|\mathcal{A}||\Gamma|)$ time. Since $\textsc{Proj}(\mathcal{A} \circ \mathcal{T})$ produces an \textit{automaton} over $\Gamma^*$, the standard $\varepsilon$-Removal and $\textsc{Min}$ algorithms can be used, which both have polynomial runtimes in the size of the automaton \cite{DBLP:books/daglib/0086373}.

Figure \ref{fig:compose_steps} gives an example of each of these stages for a given pattern and lexicon transducer.

\section{Tokenization-Preserving Pattern Promotion}\label{sec:tokenization_preserving}

The subword-level pattern promotion described in Section \ref{sec:tokenization_agnostic_pattern_promotion} allows one to form an automaton that encodes all subword sequences which, when concatenated, would match the character-level input pattern. These subword sequences are simply drawn from the subword vocabulary, and do not reflect the underlying tokenization scheme.

For example, given a pattern $\texttt{a\textvisiblespace b\textvisiblespace a\textvisiblespace a\textvisiblespace b}$ and a vocabulary $\Gamma = \{\texttt{a}, \texttt{b}, \texttt{ab}, \texttt{aba}\}$, a tokenization-agnostic subword-level automaton would accept \texttt{a\textvisiblespace b\textvisiblespace a\textvisiblespace a\textvisiblespace b}, \texttt{ab\textvisiblespace a\textvisiblespace ab}, etc. However, a MaxMatch tokenizer would output the tokenization \texttt{aba\textvisiblespace ab} when given this input. This motivates us to determine if we can simultaneously promote character-level patterns to subword-level patterns \textit{and} preserve the underlying tokenization scheme. Formally, rather than the tokenization-agnostic:
\[\{t~\mid~ \textsc{concat}(t) \in L(\mathcal{A})\},\] we want to produce the language: \[\{t~\mid~\textsc{concat}(t) \in L(\mathcal{A}) \wedge T(\textsc{concat}(t)) = t \}, \] where $T$ is a BPE or MaxMatch tokenizer.

On the surface, this seems difficult, as the pattern could describe an infinite set of strings, each with a unique canonical tokenization. However, here we show that both MaxMatch and BPE tokenization can be expressed as finite state transducers. For BPE, this is a particularly surprising result, given the algorithm's resemblance to a context-free grammar (which is strictly more powerful than a finite-state machine), it's use of a stack-like data-structure (a priority queue), and the fact that BPE merges are not performed left to right, while transducers have constant memory and operate in strictly left-to-right order.

\paragraph{$\varphi$-transitions.}
We first introduce a key component used in the constructions for both MaxMatch and BPE transducers. Automata and transducers can be augmented with a special $\varphi$ symbol used to denote \textit{failure} transitions, in the sense of Aho-Corsaick \cite{AhoC75, DBLP:conf/wia/AllauzenR18}. This augmentation does not change the power of automata or transducers (in the computability-theory sense), but instead allows for more succinct and clear representations of some common cases of transducers.

Formally, a $\varphi$-transition from a state $q$ is a transition $(q, \varphi, o, p)$ which acts like an $\varepsilon$ transition in that it does not consume an input symbol, but can only be traversed if the current symbol is not present in any other transition originating from $q$.

\subsection{MaxMatch-Preserving Pattern Promotion}\label{ssec:maxmatch_transducer}

We begin with MaxMatch tokenization, which is rooted in automata theory and is thus somewhat more straightforward to understand intuitively.

\begin{figure*}[t]
\centering
\begin{subfigure}{\linewidth}
    \centering
    \includegraphics[width=0.7\linewidth]{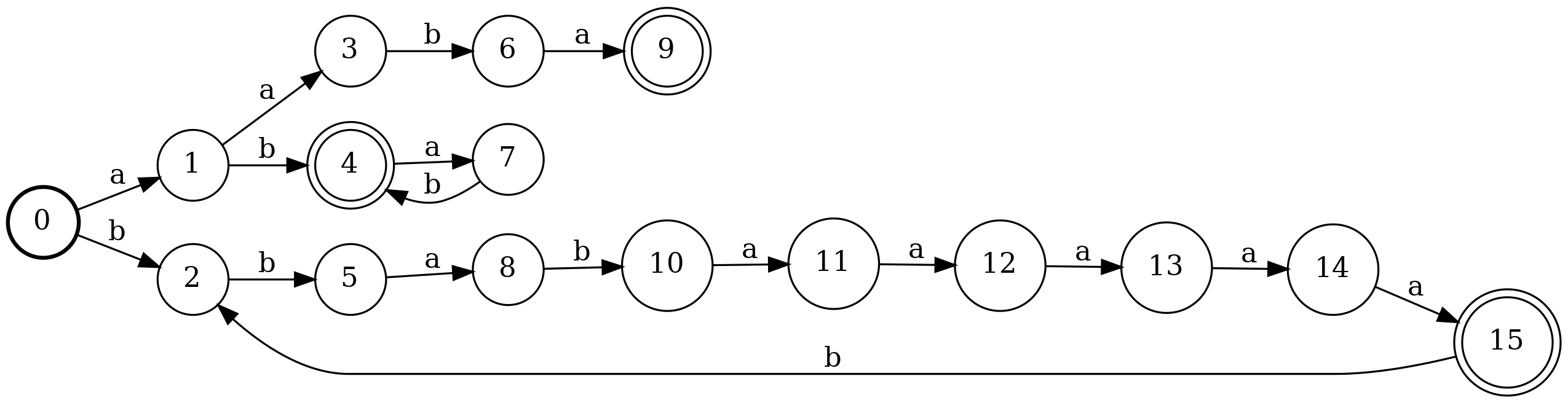} 
    \caption{A character pattern automaton $\mathcal{A}$.}\label{fig:image1}
\end{subfigure}

\begin{subfigure}{.49\linewidth}
    \centering
    \includegraphics[scale=0.5]{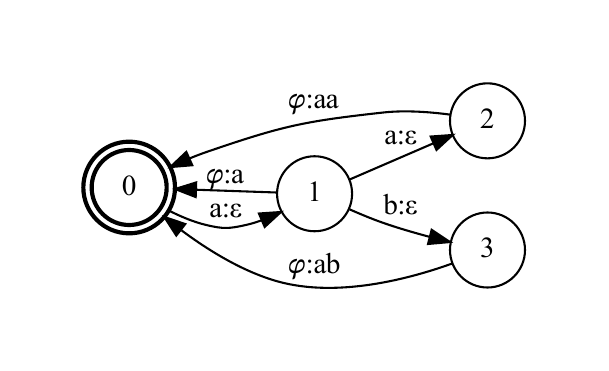} 
    \caption{The MaxMatch-Preserving Transducer $\mathcal{T}_\mathrm{Aho}$}\label{fig:basic_aho_lexicon}
\end{subfigure}
\hfill
\begin{subfigure}{0.49\linewidth}
  \centering
  \includegraphics[scale=0.5]{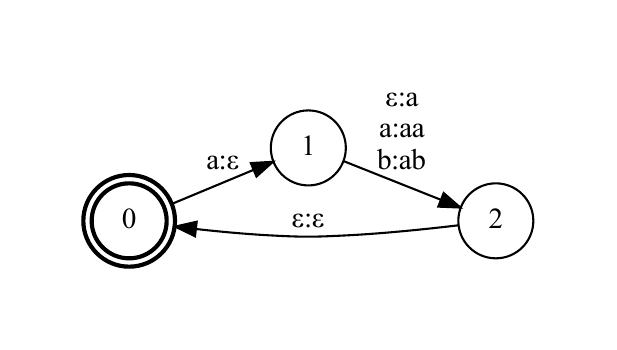} 
  \caption{The Tokenization-Agnostic Transducer $\mathcal{T}$}\label{fig:basic_lexicon}
\end{subfigure} 

\begin{subfigure}{.49\linewidth}
    \centering
    \raisebox{.15\height}{\includegraphics[width=1.0\linewidth]{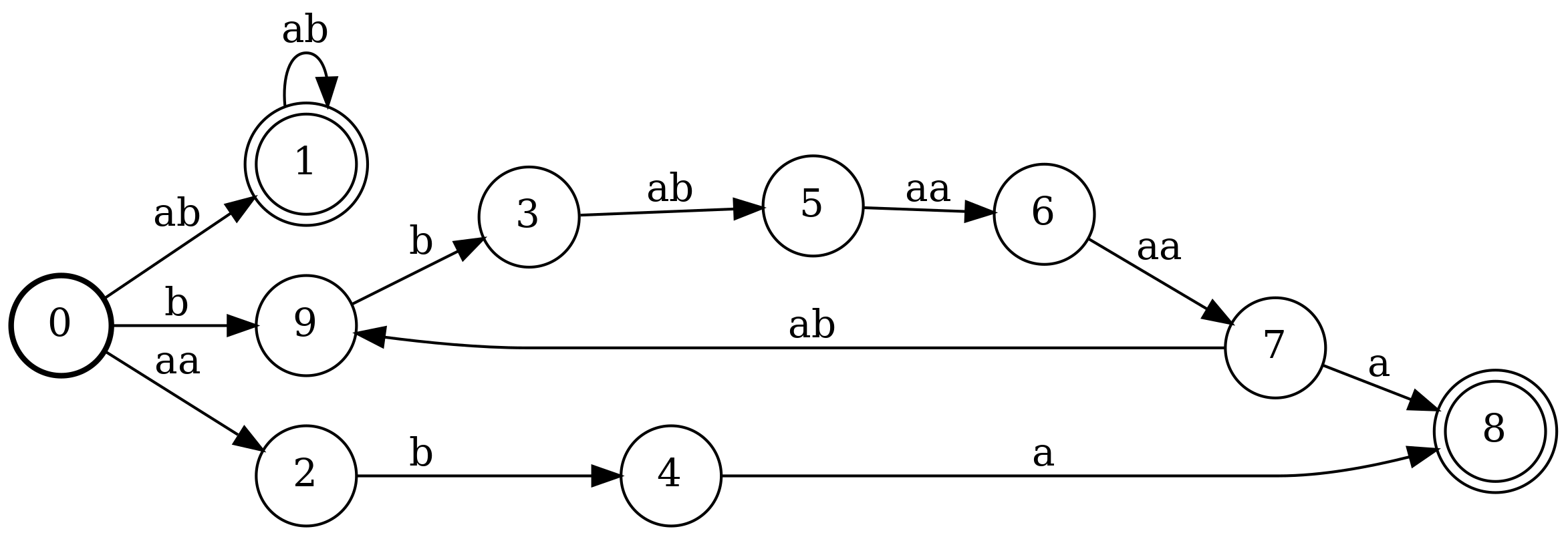} }
    \caption{$\textsc{Min}(\textsc{Proj}(\mathcal{A} \circ \mathcal{T}_{Aho}))$}\label{fig:aho_trie_composed_min}
\end{subfigure}
\hfill
\begin{subfigure}{0.49\linewidth}
  \centering
  \includegraphics[width=1.0\linewidth]{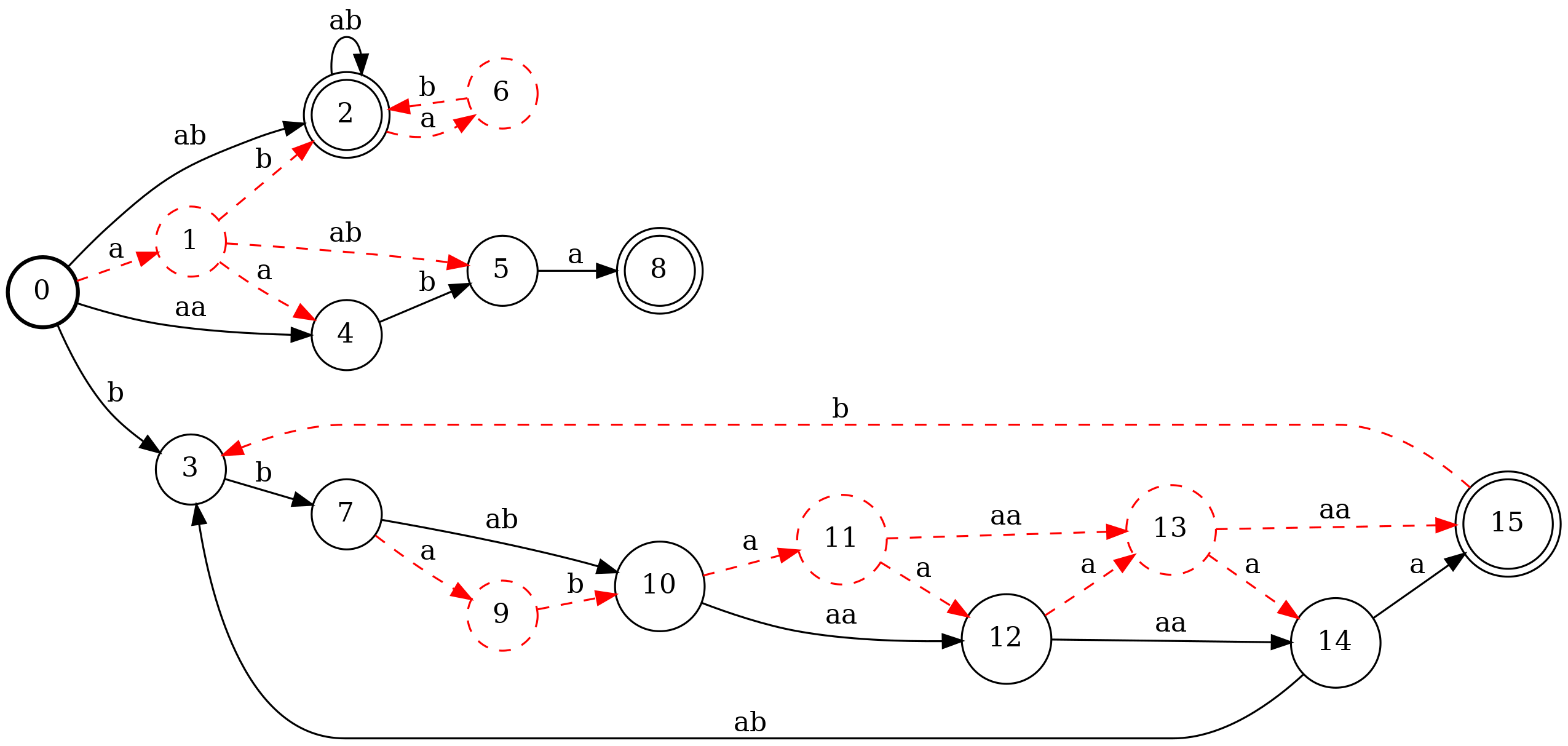} 
  \caption{$\textsc{Min}(\textsc{Proj}(\mathcal{A} \circ \mathcal{T}))$}\label{fig:regular_transducer_composed_min_new_marked_red}
\end{subfigure} 
\caption{A character-level automaton $\mathcal{A}$ is intersected with subword lexicons over \texttt{\{a, b, aa, ab\}} represented by the MaxMatch-preserving transducer $\mathcal{T}_{Aho}$, and the tokenization-agnostic transducer $\mathcal{T}$, shown in Figures (b) and (c), respectively. The results of the composition are shown in (d) and (e). In (e) specifically, arcs and states that appear in the unconstrained automaton but would not appear in the constrained automaton (since they do not encode greedy maximal matches) are shown in dashed-red.}
\label{fig:aho_comparison}
\end{figure*}

We follow \cite{song-etal-2021-fast}, and build a lexicon transducer in the style of the Aho-Corasick automaton \cite{AhoC75}.

Roughly, given a vocabulary $\Gamma$, the standard Aho-Corasick automaton allows one to find all substrings where an input string matches an item from the vocabulary in time linear to the input length. This is done by including a number of \textit{failure arcs} ($\varphi$-transitions) in the automaton---arcs which map a state (that corresponds to a prefix of a token) the state that corresponds to the longest matching \textit{suffix} of that token which is also a prefix of another word in the vocabulary, and which can only be traversed if no suitable character transition exists at the current state. The failure arcs allow for full-token prefixes of the current token to be marked as matched, while moving to a prefix of another token that is currently valid, without recomputation.

\newcite{song-etal-2021-fast} show that this construction can be slightly modified to produce only matches which are greedily as long as possible, which corresponds exactly to the MaxMatch (WordPiece) tokenization inference algorithm. They achieve this by augmenting the failure arcs (referred to as \textit{failure links})
of with a ``popping'' mechanism, which pops the longest prefix token(s) that have already been matched before moving to the suffix state via the failure arc. Like the original Aho-Corasick algorithm, failure arcs and links can be precomputed and stored for fast inference. The authors note that, despite using the algorithmic form of the Aho-Corasick algorithm, their approach can also be viewed as an application of finite-state transducers.

Algorithm \ref{alg:failure_construction} is a simplified version of the failure-trie precomputation algorithm from \cite{song-etal-2021-fast} and Algorithm \ref{alg:aho_construction} shows how to convert the failure-trie into a lexicon transducer, $\mathcal{T}_{Aho}$\footnote{Algorithms \ref{alg:failure_construction} and \ref{alg:aho_construction} are given in Appendix \ref{apx:maxmatch_algorithms} for space reasons, as they are not crucial to the results of this paper.}.

$\mathcal{T}_{Aho}$ is a valid character-to-subword lexicon transducer in that each accepting path corresponds to a sequence of characters from $\Sigma$ on the input and a sequence of subwords from $\Gamma$ on the output such that the concatenation of the subwords is the same as the character sequence. However, differently from the character-to-subword transducers from Section \ref{sec:subword_transducer}, given an input automaton $\mathcal{A}$ to compose with $\mathcal{T}_{Aho}$, there is a one-to-one correspondence between sequences accepted by $\mathcal{A}$ and sequences accepted by $\mathcal{T}_{Aho}$ which corresponds to exactly the greedy parse.

As such, $\mathcal{T}_\mathrm{Aho}$ can be composed with an input pattern $\mathcal{A}$ using the $\varphi$-transition semantics and projected to an automaton over $\Gamma$ via:
$\textsc{Min}(\textsc{Proj}(\mathcal{A} \circ \mathcal{T}_\mathrm{Aho}))$, as in Equation \ref{eq:min_projection}. The result is a subword automaton which only matches subword sequences that would match the character pattern in $\mathcal{A}$ \textit{and} are a greedy tokenization.

Figure \ref{fig:aho_comparison} gives an example of an input automaton being composed with a standard character-to-subword lexicon transducer (as described in Section \ref{sec:subword_transducer}) and an Aho-Corasick character-to-subword lexicon transducer.

\subsection{BPE-Preserving Pattern Promotion} \label{ssec:bpe_transducer}
We now turn to BPE, and attempt to construct a transducer similar to $\mathcal{T}_{Aho}$, but which produces a subword-level automaton that matches the tokenizations induced by a BPE tokenizer $(\Gamma, \mu)$.

Recall Algorithm \ref{alg:bpe_inference}, the canonical algorithm for BPE tokenization. This algorithm processes the input string in arbitrary order (in that it does not necessarily form the final tokenized sequence from the left to the right, and can skip back and forth) and resembles a context-free grammar. This suggests that the relation described by BPE is not regular and cannot be represented by a finite-state transducer.

However, consider the following alternative implementation of BPE \cite{zouhar-etal-2023-formal}:

\begin{algorithm}[H]
\begin{algorithmic}[1]
\For{$(a, b) \in \mu$}
    \State $w \gets \textsc{Apply}((a, b), w)$
\EndFor

\State \Return $w$
\end{algorithmic}
\caption{%
\hspace{-1mm}: Iterative BPE \\
    \textbf{Input}: Word $w \in \Sigma^+$, BPE Tokenizer $\mathcal{B} = (\Gamma, \mu)$ \\
    \textbf{Output}: Tokenized sequence $t \in \Gamma^+$
}
\label{alg:bpe_iterative}
\end{algorithm}

Rather than selecting the highest priority merge available, Algorithm \ref{alg:aho_construction} starts from the character level string and applies \textit{every} merge in order of priority and produces an identical result to Algorithm \ref{alg:bpe_inference}. Intuitively, $(\texttt{ab}, \texttt{c}) \rightarrow \texttt{abc}$ can only be applied if the merge $(\texttt{a}, \texttt{b}) \rightarrow \texttt{ab}$ was already applied.

This alternative algorithm suggests a finite-state transducer representation is possible --- we can simulate BPE by forming, for each $(\texttt{a}, \texttt{b}) \in \mu$, a transducer that maps \texttt{a\textvisiblespace b} $\rightarrow \texttt{ab}$. We call such a transducer a merge \textit{gadget}, $G_{(a, b)}$. The structure of these gadgets is simple, as shown in Figure \ref{fig:merge_gadget}.

\begin{figure}[ht]
\centering
\includegraphics[scale=.6]{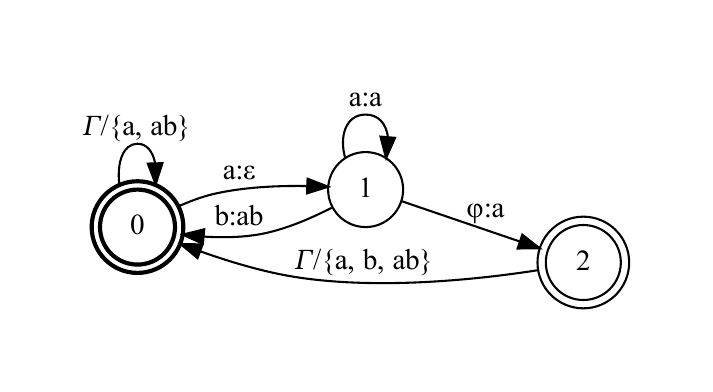}
\caption{A merge gadget $G_{(\texttt{a}, \texttt{b})}$ for the merge $(\texttt{a}, \texttt{b}) \rightarrow \texttt{ab}$. All arcs that don't have an output symbol are assumed to be of the form $(q, c, c, p)$.}
\label{fig:merge_gadget}
\end{figure}

\begin{figure}
\captionsetup[subfigure]{aboveskip=-2pt,belowskip=-2pt}
\begin{subfigure}{\linewidth}
    \centering
    \hspace*{-1cm}\includegraphics[scale=0.6]{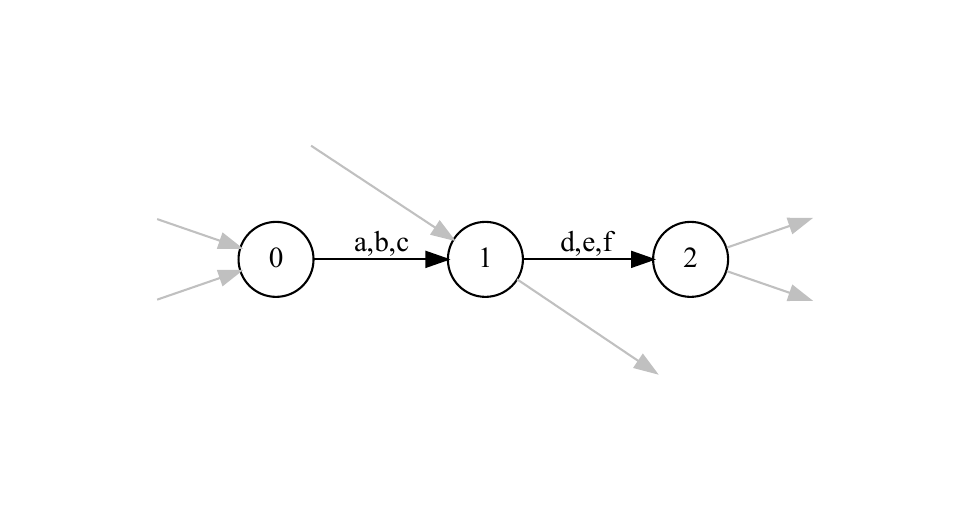} 
    \caption{A subset of an automaton with a path $\texttt{a\textvisiblespace  d}$.}\label{fig:bpe_comp_example_before}
\end{subfigure}
\begin{subfigure}{\linewidth}
    \centering
    \hspace*{-1cm}\includegraphics[scale=0.6]{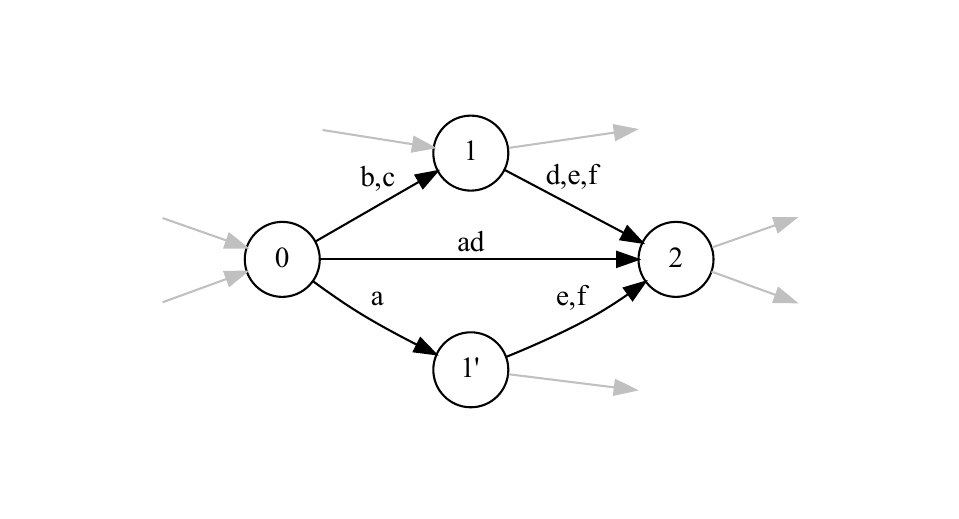} 
    \caption{After composition with $G_{(\texttt{a}, \texttt{d})}$.}\label{fig:bpe_comp_example_after}
\end{subfigure}
\caption{An example of how composing with a merge gadget modifies the input automaton.}
\label{fig:merge_subset_example}
\end{figure}

A gadget contains three states: the \textit{initial} state ($q_0$), the \textit{active} state ($q_1$), and the \textit{abort} state ($q_2$). For a merge gadget $G_{(a, b)}$, the initial state contains a self loop $(q_0, c, c, q_0)$ for each $c \in \Gamma$ such that $c \notin \{a, ab\}$ ($ab$ can't even be observed before this merge is applied). It also contains an arc to the active state $(q_0, a, \varepsilon, q_1)$. The active state contains three arcs, one which resolves the merge $(q_1, b, ab, q_0)$, one which postpones the merge $(q_1, a, a, q_1)$, and one which cleans up the merge if we read a character other than $b$ or have nothing else to consume, $(q_1, \varphi, a, q_2)$. The reason for $\varphi$ in the last arc is that, when reading the initial $a$, we do not emit an $a$ in case we will form a merge. The self-loop on state $q_1$ consumes and produces $a$'s, but if we fail to make a match, we need to produce one additional $a$ to make up for the one that was not generated on the arc from $q_0$ to $q_1$. The abort state contains an arc $(q_2, c, c, q_0), \forall c \ne a, b, ab$, which allows us to go back to the start state after failing to make a match at the active state. 

Figure \ref{fig:merge_subset_example} gives an example of an application of a merge gadget to an automaton. 

To build intuition, suppose we have an automaton $\mathcal{A}$ representing the input string \texttt{b\textvisiblespace c\textvisiblespace a\textvisiblespace b\textvisiblespace a\textvisiblespace b\textvisiblespace c\textvisiblespace c}, a BPE tokenizer $\mathcal{B} = (\Gamma, \mu)$, where $\Gamma = \{\texttt{a}, \texttt{b}, \texttt{c}, \texttt{ab}, \texttt{bc}, \texttt{cc}, \texttt{abc}\}$ and $\mu = [(\texttt{a}, \texttt{b}), (\texttt{b}, \texttt{c)}, (\texttt{c}, \texttt{c}), (\texttt{ab}, \texttt{c})]$, and a list of gadgets $\mathcal{G} = [G_{(\texttt{a}, \texttt{b})}, G_{(\texttt{b}, \texttt{c})},G_{(\texttt{c}, \texttt{c})}, G_{(\texttt{ab}, \texttt{c})}]$.

After applying $\mathcal{A} \circ G_{(a, b)}$, we have a new transducer where the input is the sequence \texttt{b\textvisiblespace c\textvisiblespace a\textvisiblespace b\textvisiblespace a\textvisiblespace b\textvisiblespace c\textvisiblespace c} but the output is the sequence \texttt{b\textvisiblespace c\textvisiblespace ab\textvisiblespace ab\textvisiblespace c\textvisiblespace c}, and all instances of \texttt{a\textvisiblespace b} have been merged to \texttt{ab}. Repeating this for each of the merge gadgets results in \[ \mathcal{A} \circ G_{(a, b)} \circ G_{(b, c)} \circ G_{(c, c)} \circ G_{(ab, c)},\] which transduces \texttt{b\textvisiblespace c\textvisiblespace a\textvisiblespace b\textvisiblespace a\textvisiblespace b\textvisiblespace c\textvisiblespace c} to \texttt{bc\textvisiblespace ab\textvisiblespace ab\textvisiblespace cc}, the exact sequence produced by $\mathcal{B}(\texttt{bcababcc})$. Written out another way, we can implement BPE by computing the composition:
\begin{equation}\label{eq:chained_composition}
    \textsc{Min}\left(\textsc{Proj}\left(\mathcal{A} \circ \left( \mathop{\bigcirc}\limits_{m \in \mu} G_{m} \right) \right)\right).
\end{equation}

Figure \ref{fig:bpe_comparison} gives an example of BPE-preserving pattern promotion given a BPE tokenizer.

\subsubsection{Runtime}
Via gadgets, we have shown that regular languages are closed under BPE transduction. The naïve application of the full merge composition in Equation \ref{eq:chained_composition} takes $O(|\mathcal{A}|\prod_{m \in \mu} |G_{m}|) = O(3^{|\mu|}|\mathcal{A}||\Gamma|)$ time \cite{DBLP:journals/ijfcs/AllauzenM09}. For a BPE tokenizer with just $30$ merges, this would be prohibitively expensive, let alone a tokenizer with $64k$ merges.

However, here we show that the actual runtime is $\textsc{Poly}(|\mathcal{A}|, |\mu|, |\Gamma|)$. In particular, we bound the size of the resulting automaton and show that each consecutive merge can be performed in polynomial time in the size of the input pattern automaton.

\begin{restatable}{theorem}{bpesize}
  Given an automaton $\mathcal{A}$ and a series of merges $m_1, m_2, \dots, m_k \in \mu$. Let $\mathcal{A}' = \textsc{Min}(\textsc{Proj}(\mathcal{A} \circ G_{m_1} \circ G_{m_2} \circ \dots \circ G_{m_{k-1}}))$.
        Then, $\textsc{Min}(\textsc{Proj}(\mathcal{A}' \circ G_{\mu_k}))$ can be computed in $\textsc{Poly}(|\mathcal{A}|, k, |\Gamma|)$ time.
\end{restatable}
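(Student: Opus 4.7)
The plan is to proceed by induction on $k$, proving simultaneously two invariants: (a) each step $\textsc{Min}(\textsc{Proj}(\mathcal{A}'_{j-1} \circ G_{m_j}))$ can be carried out in polynomial time, and (b) the resulting automaton $\mathcal{A}'_j$ has size polynomial in $|\mathcal{A}|, j, |\Gamma|$. The base case $j=0$ is $\mathcal{A}'_0 = \mathcal{A}$, so $|\mathcal{A}'_0| = |\mathcal{A}|$ trivially.

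For the inductive step, I would first bound the composition $\mathcal{A}'_{k-1} \circ G_{m_k}$. Since $G_{m_k}$ has exactly three states, Algorithm \ref{alg:generic_composition} produces a transducer with at most $3|\mathcal{A}'_{k-1}|$ states and $O(|\mathcal{A}'_{k-1}| \cdot |\Gamma|)$ transitions in time polynomial in these quantities. The projection step then reduces this to an automaton over $\Gamma$ in linear time. The $\varepsilon$-transitions introduced by projecting the $(q_0, a, \varepsilon, q_1)$ arc of the gadget must be removed; I would argue, in the spirit of \Cref{lma:proj_determinsitic}, that the deterministic structure of the gadget (it has a single $\varepsilon$-output arc, triggered exclusively on the input $a$) and the fact that $\mathcal{A}'_{k-1}$ is itself deterministic together ensure that the $\varepsilon$-removed projection remains deterministic. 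This avoids the exponential blow-up of subset-construction determinization, so only a polynomial-time $\varepsilon$-removal and Hopcroft-style minimization are needed.

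The main obstacle is establishing invariant (b): bounding $|\mathcal{A}'_k|$. A naive accounting yields a multiplicative factor of $3$ per merge, giving the $3^k$ bound the theorem explicitly aims to beat. The crux is showing that minimization collapses this blow-up. I would argue that every state of the (unminimized) composed automaton can be associated with a pair consisting of a state of $\mathcal{A}'_{k-1}$ and a single bit indicating whether the gadget is in its ``active'' state (since the abort state $q_2$ is only transient and can be shown to be equivalent to $q_0$ after projection). A Myhill–Nerode argument then shows that two composed states whose projections onto $\mathcal{A}'_{k-1}$ are equivalent, and which agree on this bit, have identical right-languages in $\mathcal{A}'_k$. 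Hence $|\mathcal{A}'_k| \le 2 |\mathcal{A}'_{k-1}|$ is not yet enough; to get the polynomial bound one must further observe that the ``active bit'' only matters at states of $\mathcal{A}'_{k-1}$ that can emit the trigger symbol $a$, so the effective increase is additive rather than multiplicative, yielding a bound of the form $|\mathcal{A}'_k| \le |\mathcal{A}'_{k-1}| + O(|\mathcal{A}'_{k-1}|)$ on the minimized size, which after careful bookkeeping collapses to $O(|\mathcal{A}| \cdot \mathrm{poly}(k, |\Gamma|))$.

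Given this polynomial size bound, the composition, projection, $\varepsilon$-removal, and minimization are all polynomial in $|\mathcal{A}'_{k-1}|$ and $|\Gamma|$, which by the inductive hypothesis is polynomial in $|\mathcal{A}|, k, |\Gamma|$. The hardest part, and the one I would spend the most care on, is the structural/Myhill--Nerode argument that rules out the exponential growth; the surrounding automata-theoretic machinery is routine once this bound is in hand.
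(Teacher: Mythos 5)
Your overall architecture matches the paper's: induct over the merges, interleave $\textsc{Min}(\textsc{Proj}(\cdot))$ with each composition (the paper justifies this interleaving in Proposition~\ref{prop:equivalent_compose}), observe that determinism survives projection and $\varepsilon$-removal so no subset construction is needed (the paper's Proposition~\ref{prop:bpe_min}), and reduce everything to a polynomial bound on the size of each minimized intermediate automaton. You also correctly identify that the entire difficulty is concentrated in that size bound. However, the argument you give for the size bound does not close the gap. You write that the effective increase is ``additive rather than multiplicative, yielding a bound of the form $|\mathcal{A}'_k| \le |\mathcal{A}'_{k-1}| + O(|\mathcal{A}'_{k-1}|)$.'' That inequality \emph{is} multiplicative --- it says $|\mathcal{A}'_k| \le C|\mathcal{A}'_{k-1}|$ for some constant $C>1$ --- and iterating it over $k$ merges gives $C^k|\mathcal{A}|$, which is exactly the exponential blow-up the theorem is supposed to rule out. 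The phrase ``after careful bookkeeping collapses to $O(|\mathcal{A}|\cdot\mathrm{poly}(k,|\Gamma|))$'' is where the actual proof has to live, and nothing in your sketch supplies it. Likewise, your ``active bit'' decomposition caps the growth at $2|\mathcal{A}'_{k-1}|$ per step, which is again exponential over $k$ steps unless you can show the doubled states collapse under minimization in a way that is controlled across \emph{all} merges, not just one.

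The missing idea in the paper is a charging argument against a resource that does not grow: each new distinguishable state created by any merge is charged to an \emph{original} triplet $(x,y,z)\in Q^3$ of the input automaton $\mathcal{A}$ (Proposition~\ref{prop:multiplicity} shows one merge applied to one triplet splits a single state into two, one of which replaces the old one, for a net gain of one). Determinism of $\mathcal{A}$ then guarantees that among all the states \emph{derived} from a given triplet over the course of $k$ merges, each merge can act on at most one of the derived triplets (Lemma~\ref{lma:k_distuingishable}), and a case analysis over pairs of derived triplets shows that even when a merge appears to apply in several derived configurations, the resulting states are pairwise indistinguishable and collapse under minimization (Lemma~\ref{lma:general_case}). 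This yields at most $k+1$ distinguishable descendants per original triplet, hence $|\mathcal{A}'|\in O(k|\mathcal{A}|^3)$ --- an additive $O(|\mathcal{A}|^3)$ per merge relative to the \emph{fixed} base automaton, not relative to the previous iterate. Your Myhill--Nerode instinct is pointing in the right direction (indistinguishability is indeed what saves the construction), but without anchoring the count to the static triplets of $\mathcal{A}$ and proving that derived states do not independently proliferate under later merges, the induction does not terminate in a polynomial.
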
\label{thm:bpe_complexity}

\begin{restatable}{corollary}{bpesizecorollary}
    The size of 
    \[\textsc{Min}\left(\textsc{Proj}\left(\mathcal{A} \circ \left( \mathop{\bigcirc}\limits_{(a,b) \in \mu} G_{(a,b)} \right) \right)\right)\] is $ \in \textsc{Poly}(|\mathcal{A}|, |\mu|, |\Gamma|).$
\end{restatable}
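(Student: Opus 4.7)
The corollary follows by iterated application of Theorem \ref{thm:bpe_complexity}. For $0 \le i \le |\mu|$, define
\[
\mathcal{A}_i = \textsc{Min}\bigl(\textsc{Proj}(\mathcal{A} \circ G_{m_1} \circ G_{m_2} \circ \cdots \circ G_{m_i})\bigr),
\]
where for $i=0$ the gadget composition is empty, so $|\mathcal{A}_0| \le |\mathcal{A}|$. The automaton $\mathcal{A}_{|\mu|}$ is precisely the one whose size the corollary bounds. A short check confirms this identification: by associativity of $\circ$, by the fact that $\textsc{Min}$ is language-preserving, and by the fact that $\textsc{Proj}$ commutes with later compositions by gadgets whose input and output alphabets both lie in $\Gamma$, the language of $\mathcal{A}_{|\mu|}$ coincides with $L(\textsc{Proj}(\mathcal{A} \circ \bigcirc_{(a,b) \in \mu} G_{(a,b)}))$; uniqueness of the minimal deterministic automaton then forces the state counts to agree.

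With this identification in hand, I would apply Theorem \ref{thm:bpe_complexity} inductively. Given $\mathcal{A}_{i-1}$, the theorem computes $\mathcal{A}_i$ in time $\textsc{Poly}(|\mathcal{A}|, i, |\Gamma|)$. Since any polynomial-time algorithm produces output of polynomial size, an induction on $i$ immediately yields $|\mathcal{A}_i| \in \textsc{Poly}(|\mathcal{A}|, i, |\Gamma|)$ for every $i$, and specializing to $i = |\mu|$ gives the claimed bound.

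The conceptual content---and thus any real difficulty---lives entirely inside Theorem \ref{thm:bpe_complexity}: the critical feature is that its per-step runtime is expressed in terms of the \emph{original} $|\mathcal{A}|$ rather than the (a priori possibly enormous) intermediate $|\mathcal{A}'|$. Without that stronger form, iteratively composing with $|\mu|$ three-state gadgets could in principle multiply the state count by up to $3^{|\mu|}$, and no polynomial bound would survive. Given the theorem, however, the corollary is a pure bookkeeping step, and the only care required is to verify the equivalence between eager and lazy application of $\textsc{Proj}$ and $\textsc{Min}$, noted above.
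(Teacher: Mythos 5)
Your proposal is correct and follows essentially the same route as the paper: both deduce the size bound from the polynomial \emph{runtime} of Theorem \ref{thm:bpe_complexity} (crucially stated in terms of the original $|\mathcal{A}|$), with the eager-versus-lazy $\textsc{Min}/\textsc{Proj}$ identification you verify by hand being exactly the paper's Proposition \ref{prop:equivalent_compose}. The paper's own proof is merely a terser version of your induction, so no substantive difference remains.
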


\begin{figure}
        \centering
        \begin{subfigure}{\linewidth}
            \centering
            \includegraphics[scale=0.5]{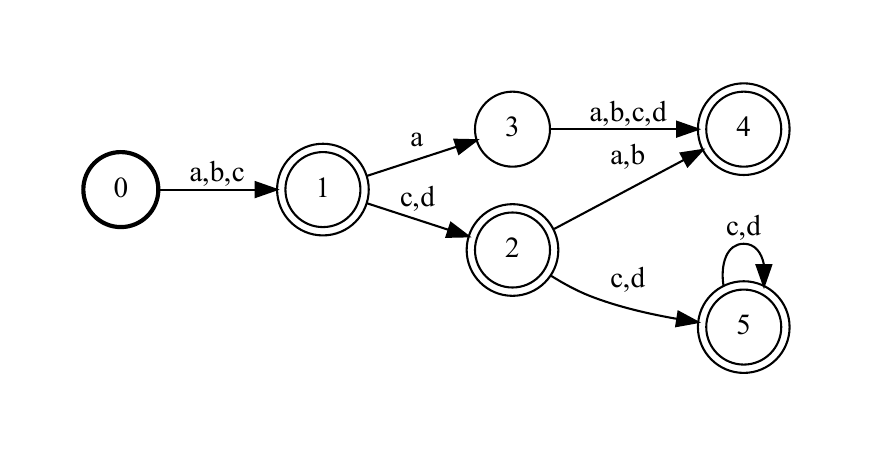}
            \caption{A pattern automaton $\mathcal{A}$.}
        \end{subfigure}
        \begin{subfigure}{\linewidth}
            \centering
            \includegraphics[scale=0.5]{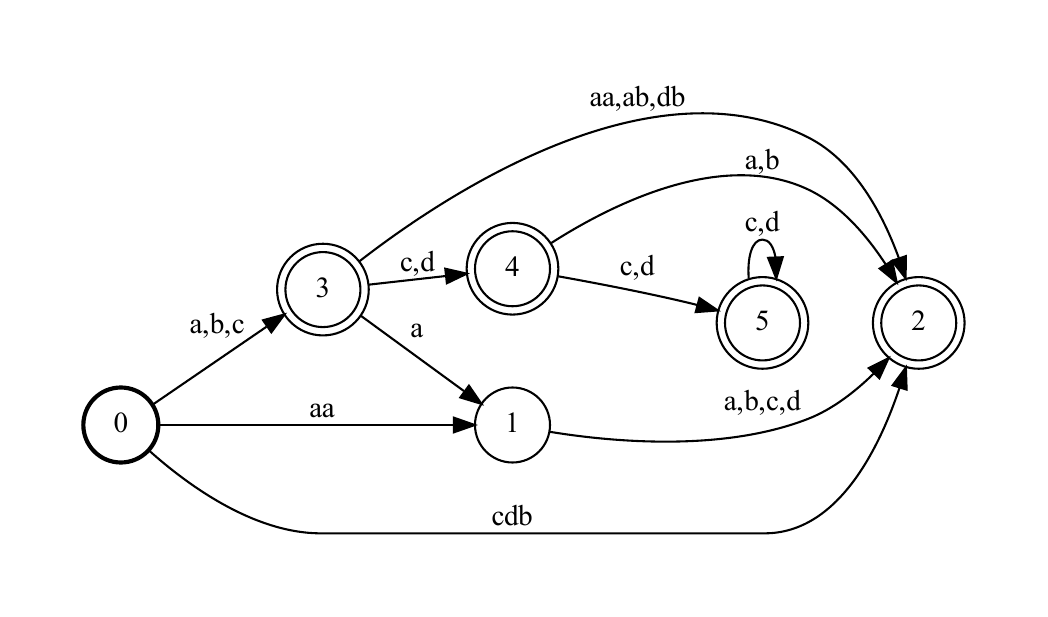}
            \caption{Tokenization Agnostic $\textsc{Min}(\textsc{Proj}(\mathcal{A} \circ \mathcal{T}))$}
        \end{subfigure}
        \begin{subfigure}{\linewidth}
            \centering
            \includegraphics[scale=0.5]{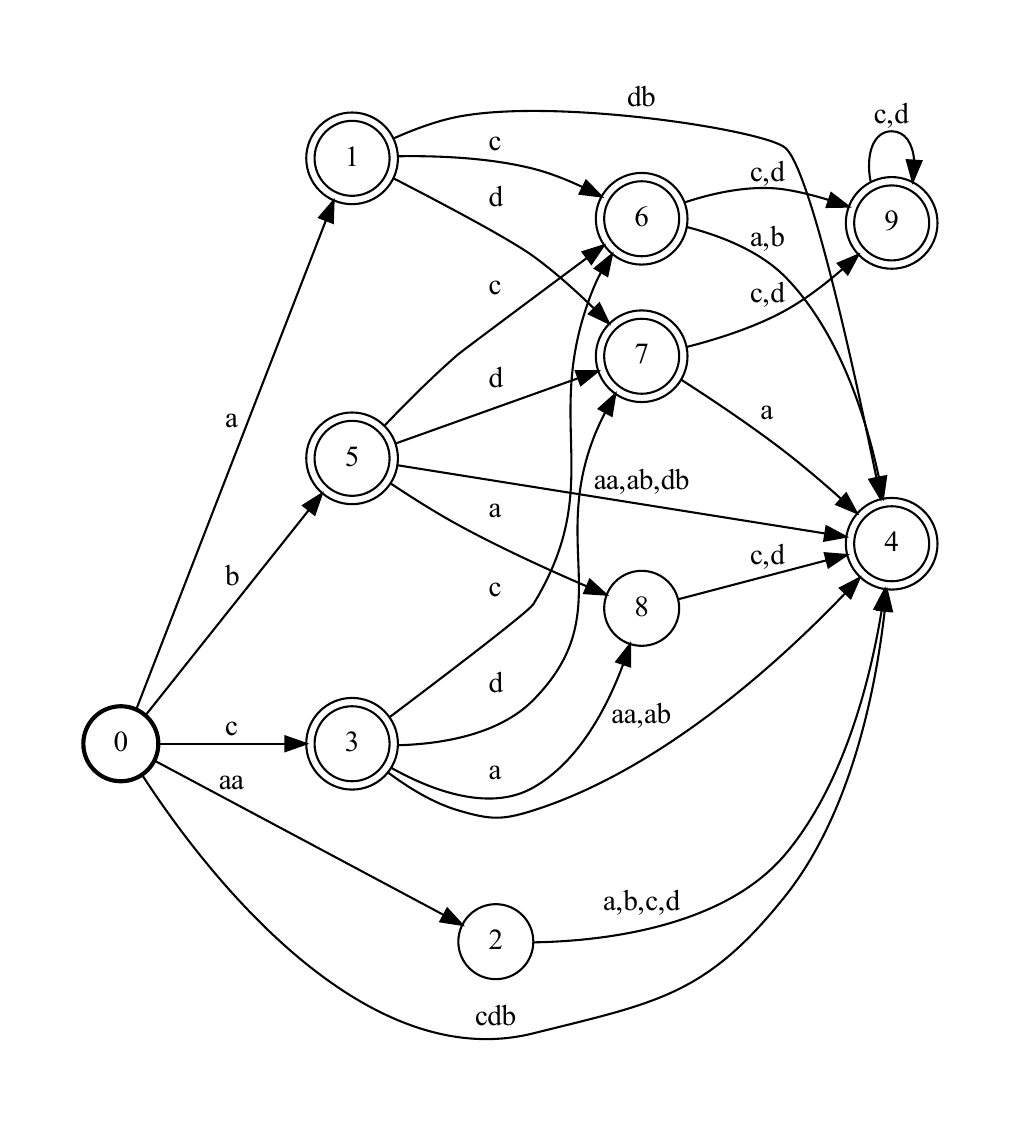}
            \caption{$\textsc{Min}(\textsc{Proj}(\mathcal{A} \circ G_{(a, a)} \circ G_{(a, b)} \circ G_{(d, b)} \circ G_{(c, db)}))$}
        \end{subfigure}

    \caption{An example of BPE-preserving pattern promotion of an automaton $\mathcal{A}$ for a BPE tokenizer $\mathcal{B}$ with $\Gamma = \{\texttt{a}, \texttt{b}, \texttt{c}, \texttt{d}, \texttt{aa}, \texttt{ab}, \texttt{db}, \texttt{cdb}\}$ and $\mu = \langle(\texttt{a}, \texttt{a}), (\texttt{a}, \texttt{b}), (\texttt{d}, \texttt{a}), (\texttt{c}, \texttt{db})\rangle$. Any path in the BPE-preserving pattern automaton (c) matches the pattern from $\mathcal{A}$ and corresponds to the tokenization produced by $\mathcal{B}$, while the tokenization-agnostic promoted pattern (b) accepts non-canonically-tokenized sequences.}
    \label{fig:bpe_comparison}
\end{figure}

\section{Guided Generation}\label{sec:guided_generation}

\textit{Guided generation} is a technique used to constrain the outputs of language models to adhere to a specified pattern \cite{willard2023efficient, koo2024automatabased}. This is necessary since typically neural language models assign non-zero probability to all sequences. If a sequence of a specified pattern is desired, it would be beneficial to be able to sample from that distribution of sequences directly. Guided generation frameworks typically take in a character-level pattern (e.g., a regular expression) and constrain the model to match that pattern by masking out logits in the decoder softmax layer that correspond to tokens which would cause the sequence to fail to match the pattern. Formally, rather than sampling from:
\[\mathcal{P}(w \mid w_1w_2\dots w_k),\]
one would sample from: \[\mathcal{P}(w \mid w_1w_2\dots w_k \wedge w_1w_2\dots w_kw\Sigma^* \in L(\mathcal{A})),\] where $\mathcal{A}$ is some automaton.

Often, $\mathcal{A}$ is more easily specified at the character level, while the language model operates at some subword level. To fix this granularity mismatch, subword-level patterns are used \cite{willard2023efficient, koo2024automatabased}. However, prior work constrains models only to match the original pattern $\mathcal{A}$ without respect to the underlying tokenization scheme. Here we discuss why that might cause issues with modeling.

First, we note that language models are not conditioned on the \textit{surface form} of the text, but rather the exact tokenization of the text. Thus, the same text tokenized in two different ways will map to different representations within the model.

Imagine we wish to constrain a language model trained with a MaxMatch-style tokenizer to output just the text $\texttt{racecar}$ (encoded by $\mathcal{A}$). At the start of decoding, it is drawing from the distribution:

\[ \mathcal{P}(w \mid \texttt{<sos>},w\Sigma^* \subseteq \mathcal{A}).\]

Because there is no context and the model is not explicitly aware that it's outputs are being constrained, the model may choose a higher probability token that still that prefix-matches the regular expression --- for example, \texttt{r}. Then, in the next inference step, the next token is drawn from: 
\[ \mathcal{P}(w \mid \texttt{<sos>\textvisiblespace r},~\texttt{r}w\Sigma^* \subseteq \mathcal{A}).\]
After several decoding steps, imagine that the model has generated the token sequence \texttt{r\textvisiblespace a\textvisiblespace ce\textvisiblespace car}, which does indeed satisfy the prescribed regular expression. However, the canonical tokenization on which the model was trained is \texttt{race\textvisiblespace car} (i.e., this is the token sequence that the MaxMatch tokenizer would have produced given the character sequence \texttt{r\textvisiblespace a\textvisiblespace c\textvisiblespace e\textvisiblespace c\textvisiblespace a\textvisiblespace r}). The model was only able to generate a sequence with surface form that matched the input pattern due to the increasingly strict constraints on its output at each step. As typical neural language models are conditioned not on the surface form of the text, but rather on the explicit choice of tokenization, this can have large downstream effects. That is, the distribution:
\[\mathcal{P}(w \mid \texttt{<sos>\textvisiblespace r\textvisiblespace a\textvisiblespace ce\textvisiblespace car}) \]
can be much different than the distribution:
\[\mathcal{P}(w \mid \texttt{<sos>\textvisiblespace race\textvisiblespace car}), \]
despite the context having the same surface form.

One way to counteract this is to periodically stop decoding, detokenize and retokenize the sequence, and then resume decoding. This will put the text back in to the canonical tokenized form that the model was trained on. However, this is a slow and expensive step (since the model must re-embed the entire context) and does not fully resolve the issue since the current text was still generated using non-canonical tokenizations.

Thus, rather than using tokenization-agnostic pattern promotion, the practitioner can use the tokenization-aware pattern-promotion constructions described in Sections \ref{ssec:maxmatch_transducer} and \ref{ssec:bpe_transducer} to avoid the issues introduced in this section by not only forcing the model to ahere to some surface-text pattern, but also to the tokenization scheme that it was trained on.

\section{Conclusion}

We formalized character-to-subword-level regular expression pattern promotion through a simple finite-state transduction framework. This framework was then extended to enforce dual, tokenizer-aware constraints: the promoted subword pattern simultaneously accepts only sequences which match the original character-level pattern, but do so while also only allowing sequences that would match an underlying BPE or MaxMatch tokenizer's output. While the MaxMatch-preserving construction is rooted in classical automata theory via the Aho-Corasick algorithm, the BPE construction is a novel and surprising result, given that BPE is, on the surface, seemingly incapable of being captured by finite-state transducers. Further, we show that, contrary to the standard complexity analysis of the BPE construction, BPE-preserving pattern promotion can be done in polynomial time in the size of the input automaton and BPE tokenizer.

The finite-state transduction framework and its BPE and MaxMatch extensions have an important application to guided generation. While subword-level patterns can constrain the outputs of LLMs to match a specific pattern, our tokenization-aware pattern promotions can do this while also adhering to the tokenization scheme that the model was trained on (and thus has an inductive bias towards).

\bibliography{references.bib}

\begin{thebibliography}{12}
\expandafter\ifx\csname natexlab\endcsname\relax\def\natexlab#1{#1}\fi

\bibitem[{Aho and Corasick(1975)}]{AhoC75}
Alfred~V. Aho and Margaret~J. Corasick. 1975.
\newblock \href {https://doi.org/10.1145/360825.360855} {Efficient string matching: An aid to bibliographic search}.
\newblock \emph{Commun. {ACM}}, 18(6):333--340.

\bibitem[{Allauzen and Mohri(2009)}]{DBLP:journals/ijfcs/AllauzenM09}
Cyril Allauzen and Mehryar Mohri. 2009.
\newblock \href {https://doi.org/10.1142/S0129054109006772} {N-way composition of weighted finite-state transducers}.
\newblock \emph{Int. J. Found. Comput. Sci.}, 20(4):613--627.

\bibitem[{Allauzen and Riley(2018)}]{DBLP:conf/wia/AllauzenR18}
Cyril Allauzen and Michael~D. Riley. 2018.
\newblock \href {https://doi.org/10.1007/978-3-319-94812-6\_5} {Algorithms for weighted finite automata with failure transitions}.
\newblock In \emph{Implementation and Application of Automata - 23rd International Conference, {CIAA} 2018, Charlottetown, PE, Canada, July 30 - August 2, 2018, Proceedings}, volume 10977 of \emph{Lecture Notes in Computer Science}, pages 46--58. Springer.

\bibitem[{Cognetta et~al.(2019)Cognetta, Allauzen, and Riley}]{DBLP:conf/fsmnlp/CognettaA019}
Marco Cognetta, Cyril Allauzen, and Michael Riley. 2019.
\newblock \href {https://doi.org/10.18653/V1/W19-3105} {On the compression of lexicon transducers}.
\newblock In \emph{Proceedings of the 14th International Conference on Finite-State Methods and Natural Language Processing, {FSMNLP} 2019, Dresden, Germany, September 23-25, 2019}, pages 18--26. Association for Computational Linguistics.

\bibitem[{Gage(1994)}]{Gage1994ANA}
Philip Gage. 1994.
\newblock \href {https://api.semanticscholar.org/CorpusID:59804030} {A new algorithm for data compression}.
\newblock \emph{The C Users Journal archive}, 12:23--38.

\bibitem[{Koo et~al.(2024)Koo, Liu, and He}]{koo2024automatabased}
Terry Koo, Frederick Liu, and Luheng He. 2024.
\newblock \href {https://openreview.net/forum?id=BDBdblmyzY} {Automata-based constraints for language model decoding}.
\newblock In \emph{First Conference on Language Modeling}.

\bibitem[{Schuster and Nakajima(2012)}]{DBLP:conf/icassp/SchusterN12}
Mike Schuster and Kaisuke Nakajima. 2012.
\newblock \href {https://doi.org/10.1109/ICASSP.2012.6289079} {Japanese and korean voice search}.
\newblock In \emph{2012 {IEEE} International Conference on Acoustics, Speech and Signal Processing, {ICASSP} 2012, Kyoto, Japan, March 25-30, 2012}, pages 5149--5152. {IEEE}.

\bibitem[{Sennrich et~al.(2016)Sennrich, Haddow, and Birch}]{sennrich-etal-2016-neural}
Rico Sennrich, Barry Haddow, and Alexandra Birch. 2016.
\newblock \href {https://doi.org/10.18653/v1/P16-1162} {Neural machine translation of rare words with subword units}.
\newblock In \emph{Proceedings of the 54th Annual Meeting of the Association for Computational Linguistics (Volume 1: Long Papers)}, pages 1715--1725, Berlin, Germany. Association for Computational Linguistics.

\bibitem[{Sipser(1997)}]{DBLP:books/daglib/0086373}
Michael Sipser. 1997.
\newblock \emph{Introduction to the theory of computation}.
\newblock {PWS} Publishing Company.

\bibitem[{Song et~al.(2021)Song, Salcianu, Song, Dopson, and Zhou}]{song-etal-2021-fast}
Xinying Song, Alex Salcianu, Yang Song, Dave Dopson, and Denny Zhou. 2021.
\newblock \href {https://doi.org/10.18653/v1/2021.emnlp-main.160} {Fast {W}ord{P}iece tokenization}.
\newblock In \emph{Proceedings of the 2021 Conference on Empirical Methods in Natural Language Processing}, pages 2089--2103, Online and Punta Cana, Dominican Republic. Association for Computational Linguistics.

\bibitem[{Willard and Louf(2023)}]{willard2023efficient}
Brandon~T. Willard and Rémi Louf. 2023.
\newblock \href {http://arxiv.org/abs/2307.09702} {Efficient guided generation for large language models}.

\bibitem[{Zouhar et~al.(2023)Zouhar, Meister, Gastaldi, Du, Vieira, Sachan, and Cotterell}]{zouhar-etal-2023-formal}
Vil{\'e}m Zouhar, Clara Meister, Juan Gastaldi, Li~Du, Tim Vieira, Mrinmaya Sachan, and Ryan Cotterell. 2023.
\newblock \href {https://doi.org/10.18653/v1/2023.findings-acl.38} {A formal perspective on byte-pair encoding}.
\newblock In \emph{Findings of the Association for Computational Linguistics: ACL 2023}, pages 598--614, Toronto, Canada. Association for Computational Linguistics.

\end{thebibliography}
\bibliographystyle{acl_natbib}

\onecolumn

\appendix

\section{Tokenization-Agnostic Lexicon Construction}\label{apx:tokenization_agnostic_lexicon}

\begin{algorithm}[ht]
{\fontsize{9}{9}\selectfont
\begin{algorithmic}[1]
\State $q_\mathrm{start} \gets \text{new state}$, $Q \gets \{q_\mathrm{start}\}$, $F \gets \{\}$
\State $\delta \gets \text{empty transition function}$
\For{$w \in \Gamma$}
    \State $\mathrm{cur} \gets q_\mathrm{start}$
    \For{$c \in w$}
        \State $q' \gets \text{new state}$
        \State $Q \gets Q \cup \{q'\}$
        \If{$c \text{ is last character of } w$}
            \State $\delta \gets \delta \cup \{(\mathrm{cur}, c, w, q')\}$ \Comment{Arc emitting $w$}
        \Else
                \State $\delta \gets \delta \cup \{(\mathrm{cur}, c, \varepsilon, q')\}$
        \EndIf
        \State $\mathrm{cur} \gets q'$
    \EndFor
    \State $q' \gets \text{new state}$
    \State $Q \gets Q \cup \{q'\}$
    \State $\delta \gets \delta \cup \{(\mathrm{cur}, \varepsilon, w, q')\}$
    \State $F \gets F \cup \{q'\}$ \Comment{$q'$ is end of word}
\EndFor
\State $\mathcal{T} \gets \varepsilon\textsc{-Close}(\textsc{Determinize}((\Sigma, \Gamma, Q, q_\mathrm{start}, F, \delta)))$ \label{line:minimize}
\State \Return $\mathcal{T}$ 
\end{algorithmic}
}
\caption{%
    \hspace{-1mm}: Lexicon Construction\newline
    \textbf{Inputs}: Subword Vocabulary $\Sigma \subseteq \Gamma \subset \Sigma^+$
}
\label{alg:lexicon_construction}
\end{algorithm}

\section{MaxMatch Transducer Construction}\label{apx:maxmatch_algorithms}
We briefly reintroduce the terminology from \cite{song-etal-2021-fast}.
Let $\Gamma$ be a subword vocabulary, and let $T$ be a trie with state space $V$ such that each state in the trie corresponds to a prefix of a token in $\Gamma$. Let $v \in V$ be a state in the trie corresponding to prefix $w_1w_2\dots w_k$. Define two functions $f(v) : V \rightarrow V \cup \{\texttt{null} \}$, the failure link function, and $F(v) : V \rightarrow \Gamma^*$, the failure pop function. $f(v)$ computes the state $v$ which corresponds to the suffix $w_kw_{k+1}\dots w_n$ that remains after chunking $w_1w_2\dots w_{k-1}$ into tokens $t_1, t_2, \dots t_l$ where $t_i \in \Gamma$ and each corresponds to the longest possible matching token at each step. Then, $F(v) = [t_1, t_2, \dots t_k]$. Algorithm \ref{alg:failure_construction} does not differentiate between marked and unmarked subwords (e.g., \texttt{token} vs \texttt{\#\#token}), but can easily be extended to handle this case.

\begin{minipage}{0.47\textwidth}
\begin{algorithm}[H]
{\centering \fontsize{9}{9}\selectfont
\begin{algorithmic}[1]
\State $\mathrm{root} \gets \Call{Node}{\null}$ \Comment{Build a trie representing $\Gamma$}
\For{$w \in \Gamma$}
    \State $\mathrm{cur} \gets \mathrm{root}$
    \For{$c \in w$}
        \If{$c \notin \mathrm{cur}.\mathrm{children}$}
            \State $\mathrm{cur}.\mathrm{children}[c] \gets \Call{Node}{\null}$
            \State $\mathrm{cur}.\mathrm{children}[c].\mathrm{str} \gets \mathrm{cur}.\mathrm{str} + c$
        \EndIf
        \State $\mathrm{cur} \gets \mathrm{cur}.\mathrm{children}[c]$
    \EndFor
    \State $\mathrm{cur}.\mathrm{final} = \texttt{True}$ \Comment{Corresponds to $w \in \Gamma$}
\EndFor

\State $\mathrm{queue} \gets [\mathrm{root}]$ \Comment{Begin computing failure arcs}

\While{$\mathrm{queue}$ is not empty}
    \State $\mathrm{cur} \gets \mathrm{queue}.\textsc{Dequeue}()$
    \For{$c \in \mathrm{cur}.\mathrm{children}$}
        \State $v \gets \mathrm{cur}.\mathrm{children}[c]$
        \If{$v.final$}
            \State $f(v) \gets r$
            \State $F(v) \gets [\mathrm{cur}.\mathrm{str}]$
        \Else
            \State $z \gets f(\mathrm{cur})$
            \State $Z \gets []$
            \While{$z \ne \texttt{null}$}
                \State $\textsc{Extend}(Z, F(z))$
                \State $z \gets f(z)$
            \EndWhile
            \If{$z \ne \texttt{null}$}
                \State $f(v) \gets z.\mathrm{children}[c]$
                \State $F(v) \gets \textsc{Extend}(F(\mathrm{cur}), Z)$
            \EndIf
        \EndIf
    \State $\mathrm{queue}.\textsc{Enqueue}(v)$
    \EndFor
\EndWhile
\State \Return $\mathrm{root}, f, F$
\end{algorithmic}
}
\caption{%
\hspace{-1mm}: Failure Function Computation \cite{song-etal-2021-fast} \\
    \textbf{Input}: Vocabulary $\Gamma$ \\
    \textbf{Outputs}: Trie $\mathrm{root}$, Failure Functions $f/F$
}
\label{alg:failure_construction}
\end{algorithm}
\end{minipage}
\hfill
\begin{minipage}{0.47\textwidth}
\begin{algorithm}[H]
{\centering \fontsize{9}{9}\selectfont
\begin{algorithmic}[1]
\State $Q \gets \{q_\mathrm{root}\}$ \Comment{State set}
\State $\mathrm{queue} \gets [\mathrm{root}]$
\While{$\mathrm{queue}$ is not empty}
    \State $cur \gets \mathrm{queue}.\textsc{Dequeue}()$
    \For{$c \in \mathrm{cur}.\mathrm{children}$}
        \State $v \gets \mathrm{cur}.\mathrm{children}[c]$
        \State $\textsc{Append}(Q, q_v)$
        \State $\delta \gets \delta \cup (q_\mathrm{cur}, c, \varepsilon, q_v)$
        \State $\mathrm{queue}.\textsc{Enqueue}(v)$
    \EndFor
    \If{$f(\mathrm{cur}) \ne \texttt{null}$} \Comment{Add $\varphi$-arcs}
        \State $q_\mathrm{tmp} \gets q_\mathrm{cur}$
        \For{$w \in F(\mathrm{cur})$}
            \If{$w$ is the last element in $F(\mathrm{cur})$}
                \State $\delta \gets \delta \cup (q_\mathrm{tmp}, \varphi, w, q_{f(\mathrm{cur})})$
            \Else
                \State $q_\mathrm{next} \gets \text{ new state}$
                \State $\textsc{Append}(Q, q_\mathrm{next})$
                \State $\delta \gets \delta \cup (q_\mathrm{tmp}, \varphi, w, q_\mathrm{next})$
                \State $q_\mathrm{tmp} \gets q_\mathrm{next}$
            \EndIf
        \EndFor
    \EndIf
\EndWhile
\State \Return $(Q, \Sigma, \Gamma, \delta, q_\mathrm{root}, \{q_\mathrm{root}\})$ \Comment{$\mathcal{T}_{Aho}$}
\end{algorithmic}
}
\caption{%
    \hspace{-1mm}: Trie-to-Transducer Conversion \\
    \textbf{Inputs}: Trie $\mathrm{root}$, Failure Functions $f/F$, Vocabulary $\Gamma$ over $\Sigma$ \\
    \textbf{Output}: Aho-Corasick Lexicon transducer $\mathcal{T}_{Aho}$
}
\label{alg:aho_construction}
\end{algorithm}
\end{minipage}

\section{Proofs}

\projdet*

\begin{proof}
Let $\mathcal{T}$ be a transducer which recognizes Equation \ref{eq:lexicon_formulation} and was built by Algorithm \ref{alg:lexicon_construction}. By construction, each path from $q_0$ to some $q \in F$ that does not contain an $(\varepsilon, \varepsilon)$ transition (that is, it does not return to the start state) contains only a single non-$\varepsilon$ output label.

Thus, during composition, suppose there is some created state $(q, q_{start})$, where $q \in \mathcal{A}$ and $q_{start}$ is the start state of the lexicon transducer. Let $(p, q_{start})$ be a state that is reached after reading some path with exactly one non-$\varepsilon$ output label. Then, after projection, this path is a sequence of $\varepsilon$ transitions followed by a single non-$\varepsilon$ transition. This can be contracted to a single transition from from $(q, q_{start})$ to $(p, q_{start})$ labeled with the non-$\varepsilon$ label from $\Gamma$. Since each path from the start state to a final state in $\mathcal{T}$ contains a single, unique output label, there cannot be more than one path leading out of $(q, q_{start})$ with that label, thus that state has a deterministic transition function.
\end{proof}

\begin{proposition}\label{prop:multiplicity}
Consider an automaton $\mathcal{A}$ and a single merge $G_{(a, b)}$, and a triplet of states $x, y, z \in Q$ such that $\delta(x, a) = y$, $\delta(y, b) = z$. During the composition of $\mathcal{A} \circ G_{(a, b)}$ over $x, y, z$ at most one new distinguishable state is created.
\end{proposition}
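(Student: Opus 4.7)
The plan is to enumerate the product states that can arise during the composition $\mathcal{A} \circ G_{(a,b)}$ restricted to the triplet $(x,y,z)$, and classify each as either corresponding to an old state or as genuinely new. Since the gadget has three states $\{q_0,q_1,q_2\}$, the candidates lie in $\{x,y,z\} \times \{q_0,q_1,q_2\}$, giving at most nine product states to consider.

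I would first identify $(x,q_0), (y,q_0), (z,q_0)$ with the original states $x, y, z$: each inherits precisely the non-$a$, non-$ab$ transitions of its first coordinate via the $q_0$-self-loops of the gadget, so none of these is new. Next, I would rule out product states involving $q_1$ or $q_2$ at $x$ or $z$: by the gadget's structure, $(\cdot,q_1)$ is reached only immediately after an $a$-transition and $(\cdot,q_2)$ only from $(\cdot,q_1)$ via $\varphi$, so if any of $(x,q_1), (x,q_2), (z,q_1), (z,q_2)$ appears in the full composition it is attributable to a triple other than $(x,y,z)$. This leaves $(y,q_1)$ and $(y,q_2)$ as the only candidates for new states at this triple.

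The heart of the argument --- and the main obstacle --- is showing that at most one of $(y,q_1)$ and $(y,q_2)$ is distinguishable. The key observation is that $(y,q_2)$ is always entered from $(y,q_1)$ via the $\varphi$-transition $(q_1,\varphi,a,q_2)$, which emits $a$ without consuming input, and is immediately exited by a $c$-transition $(q_2,c,c,q_0)$ with $c \ne a,b,ab$. Hence every path through $(y,q_2)$ is a two-step sequence $(y,q_1) \to (y,q_2) \to (\delta(y,c),q_0)$ whose transducer behavior matches a single composite arc $(y,q_1) \xrightarrow{c\,:\,ac} (\delta(y,c),q_0)$; the state $(y,q_2)$ is a contractible intermediate whose only role is to serialize the two-symbol output $ac$. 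After $\textsc{Proj}$ and the standard $\varepsilon$-removal and minimization on the resulting automaton, $(y,q_2)$ either becomes unreachable or is absorbed into an existing state, leaving $(y,q_1)$ as the unique new distinguishable state contributed by this triple.
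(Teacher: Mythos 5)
Your decomposition into product states is sound and, up to the abort state, tracks the paper's own argument: the paper describes the composition as splitting $y$ into $y'$ (reached by the non-$a$ in-arcs, with $\delta(y')=\delta(y)$, hence indistinguishable from the old $y$) and $y''$ (reached by the $a$-arc from $x$, with the $b$-arc removed), which correspond exactly to your $(y,q_0)$ and $(y,q_1)$; the count of one comes from removing $y$ and adding these two. You have also correctly located the delicate point --- what to do with $(y,q_2)$ --- which the paper's proof silently elides.

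The gap is that your resolution of that point does not hold. The arc $(q_1,\varphi,a,q_2)$ has output label $a$, not $\varepsilon$, so after $\textsc{Proj}$ it is a genuine $a$-labelled arc into $(y,q_2)$; $\varepsilon$-removal does not touch it, and $(y,q_2)$ is reachable whenever $(y,q_1)$ is and $y$ has any out-arc on a symbol other than $a$ or $b$. Nor is it absorbed by minimization: its right language consists only of continuations on symbols $c\notin\{a,b,ab\}$, whereas $(y,q_0)$ additionally admits the $b$-continuation into $z$ (nonempty since $\mathcal{A}$ is trim). Forbidding exactly that $b$-continuation after an emitted-but-unmerged $a$ is the entire purpose of the abort state, so identifying $(y,q_2)$ with $(y,q_0)$ would reintroduce the non-canonical tokenization $\dots a\,b\dots$; identifying it with $(y,q_1)$ fails because $(y,q_1)$ carries the $ab$-arc to $z$. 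Hence in general $(y,q_2)$ survives as a second distinguishable state attributable to the triplet, and the claim that it ``becomes unreachable or is absorbed'' is the step that fails. To repair the argument you must either account for it explicitly --- each triplet contributes a bounded constant number (two) of new classes, which is all the downstream polynomial bound actually requires --- or follow the paper's route of describing the net effect as a direct rewiring of $y$'s in- and out-arcs into $y'$ and $y''$, accepting that this description, like the paper's, does not separately book-keep the abort state.
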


\begin{proof}
    First, we note that if there is no such triplet $x, y, z \in Q$ where $\delta(x, a) = y$, $\delta(y, b) = z$, then the composition $\mathcal{A} \circ G_{(a, b)}$ will describe exactly the same language as $\mathcal{A}$ by definition, so their minimal forms are identical and no new state is created. So, we assume that such a triplet exists. The composition of an automaton with a merge gadget must eliminate the possibility of reading $\texttt{a\textvisiblespace b}$.

    Thus, the state $y$ must be split to eliminate this sequence. Let $y'$ and $y''$ be two new states. Then, for each $c \in \delta(x)$ s.t. $c \ne a$, set $\delta(x, c) = y'$. And, for all transitions $\delta(q, c) = y$ set $\delta(q, c) = y'$. Finally, set $\delta(q, a) = y''$. Then, set $\delta(y') = \delta(y)$ and $\delta(y'', c) = \delta(y, c), \forall c \ne b$.

    By construction, all paths that passed through $y$ are preserved, except for the path from $x \rightarrow y \rightarrow z$ reading $\texttt{a\textvisiblespace b}$ and the original state $y$ can be removed (alternatively, $y'$ can be constructed from $y$ in place).
\end{proof}

\begin{remark}
Figure \ref{fig:merge_subset_example} gives an example of Proposition \ref{prop:multiplicity}.
\end{remark}

\begin{remark}
 We call $y'$ and $y''$ \textit{derived} states from the triplet $x, y, z$ as they are the states that are derived from $y$ after a merge gadget is applied and splits the state. We note that a derived state is specific to the exact triplet (i.e., the derived states of $y$ from the triplet $x, y, z$ are different from the derived states of $y$ from the the triplet $q, y, z$, where $q \ne x$).
\end{remark}

\begin{lemma}\label{lma:k_distuingishable}
Let $\mathcal{A}$ be a trim, deterministic automaton and $m_1, m_2, \dots, m_k$ be a series of merges. Assume there is a triplet of states $x, y, z \in \mathcal{A}$ and let $Y = \{y, y^1, y^2, \dots, y^l\}$ be the set of derived states of $y$ some number of times during the compositions with $\{G_{m_i}\}_{i}^k$. Then, $Y$ contains at most $k+1$ distinguishable states.
\end{lemma}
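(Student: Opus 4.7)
The plan is to proceed by induction on $k$. The base case $k = 0$ is immediate: $Y = \{y\}$ has a single distinguishable state, and $1 \le k+1$.

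For the inductive step I would rely on a structural sub-lemma of the form: after any prefix $m_1, \dots, m_{k-1}$ of the merge sequence, the incoming-character sets $A_i = \{c \in \Sigma : \exists q \text{ with } \delta(q, c) = y^i\}$ are pairwise disjoint across the distinguishable derived states $y^i \in Y_{k-1}$. Intuitively, every time a merge $(a_j, b_j)$ is applied, its split (per Proposition \ref{prop:multiplicity}) moves all $a_j$-incoming edges to the newly created state $y''$ while leaving every other incoming edge on the copy-equivalent $y'$; so the character $a_j$ migrates cleanly from one $A$-set to another and disjointness is preserved. I would prove this sub-lemma by a parallel induction on the merges: initially only $y$ itself has any incoming transitions, so disjointness is vacuous, and the inductive step reuses the construction in Proposition \ref{prop:multiplicity}.

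Given the sub-lemma, the inductive step of the main lemma is short. When merge $m_k = (a_k, b_k)$ is processed, the sub-lemma says that at most one state $y^i \in Y_{k-1}$ can have $a_k \in A_i$, and only this state can appear as the middle of a triplet for merge $m_k$. If no such state exists, or if $y^i$ lacks a $b_k$-outgoing transition, the automaton is unchanged with respect to the derived states of $y$. Otherwise, applying Proposition \ref{prop:multiplicity} to that (unique) middle state yields at most one new distinguishable derived state. In every case $|Y_k| \le |Y_{k-1}| + 1 \le k + 1$, closing the induction.

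The hard part will be the sub-lemma. One must confirm that the splitting construction reroutes \emph{all} incoming $a_j$-transitions to $y''$, not merely the specific triplet's $x$-edge emphasized in the proof of Proposition \ref{prop:multiplicity}, so that $a_j \notin A_{y'}$ after the split; this needs a careful reading of the composition $\mathcal{A} \circ G_{(a_j, b_j)}$, where the gadget's transitions $(q_0, a_j, \varepsilon, q_1)$ and $(q_1, a_j, a_j, q_1)$ funnel exactly the $a_j$-incoming copies of $y$ into the ``active'' state of the gadget. I also need to check that the post-composition minimization preserves the invariant: merging two equivalent derived states only unions their $A$-sets, which were already disjoint from every other $A_j$ by the inductive hypothesis, so the disjointness invariant persists. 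A minor auxiliary point is that the lemma counts distinguishable states, so any non-distinguishable copies that arise transiently during composition simply collapse into the same equivalence class and do not inflate $|Y_k|$.
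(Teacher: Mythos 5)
Your proof is correct in substance and reaches the same pivotal fact as the paper --- each merge can split at most one member of $Y$, so $|Y|$ grows by at most one per merge and is bounded by $k+1$ --- but it gets there by a genuinely different mechanism. The paper's proof is a two-sentence appeal to determinism of \emph{outgoing} arcs: if two triplets $(x, y^i, z)$ and $(x, y^j, z)$ with $y^i \neq y^j$ were both eligible for the merge $(a,b)$, then $x$ would need two $a$-labelled arcs to distinct states, contradicting determinism of $\mathcal{A}$; the count then follows by iterating Proposition~\ref{prop:multiplicity}. You instead make the induction explicit and drive it with an invariant on \emph{incoming} labels --- that the sets $A_i$ of characters entering the distinguishable derived states are pairwise disjoint --- which forces at most one $y^i$ to have $a \in A_i$ and hence at most one eligible middle state. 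Your route is more work (the sub-lemma must be carried through composition and minimization), but it buys something the paper's one-liner does not: it covers eligible triplets $(q, y^i, r)$ and $(q', y^j, r')$ with $q \neq q'$, which the determinism-of-$x$ argument does not directly exclude and which the paper only addresses later in the case analysis of Lemma~\ref{lma:general_case}. You also correctly identify the delicate point that the gadget's arcs $(q_0, a, \varepsilon, q_1)$ funnel \emph{all} $a$-incoming edges to the active copy, which is what actually makes $A_{y''} = \{a\}$ and preserves disjointness. One detail you should add to the sub-lemma's inductive step: a merge $(a,b)$ also introduces the fresh label $ab$ on arcs entering whichever derived state previously received $b$; since your invariant already guarantees $b$ lies in at most one $A_j$, the new label lands in at most one set and disjointness survives, but this case needs to be stated alongside the ``$a$ migrates cleanly'' case. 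Similarly, when arguing that minimization preserves the invariant you should rule out (or accommodate) a derived state being merged with a state from outside $Y$, not only with another derived state.
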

\begin{proof}
This follows directly from the determinism of $\mathcal{A}$. Since $\mathcal{A}$ is deterministic, for any merge gadget, there is at most one suitable triplet from $\{x\}\times Y \times \{z\}$ which the merge gadget can be applied to (as otherwise either $x$ or some $y' \in Y$ would contain two arcs with the same label, violating determinism). Thus, for each merge, the singular applicable triplet can be split at most once per merge according to Proposition \ref{prop:multiplicity}. Since there are at most $k$ merges, then a derived state can be split at most $k$ times, so $Y$ has a maximum cardinality of $k+1$.
\end{proof}

    So far we have only considered how successive merges act on a set triplet $x, y, z$ or a set of triplets $\{x\} \times Y \times \{z\}$ where $Y$ is the set of derived states from the triplet $x, y, z$. Now we are concerned the general case $X \times Y \times Z$. We consider this by case analysis.

\begin{lemma}\label{lma:general_case}
Let $\mathcal{A}$ be a trim, deterministic automaton (with states $Q$), $x,y, z \in Q$ be a triplet and $X, Y, Z$ be some set of derived states where $x \in X$, $y \in Y$, and $z \in Z$. Let $x', x'' \in X$, $y', y'' \in Y$, and $z', z'' \in Z$ be arbitrary states in the derived state sets. Consider a merge gadget $G_{(a, b)}$ and let $x', y', z'$ and $x'', y'', z'' \in X, Y, Z$ be two triplets of states. Then the application of $G_{(a, b)}$ to $x', y', z'$ and $x'', y'', z''$ produces at most one new indistinguishable state.
\end{lemma}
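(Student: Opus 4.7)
The plan is to apply Proposition \ref{prop:multiplicity} to each of the two triplets separately and then argue that the two newly produced derived states of $Y$ are Myhill--Nerode equivalent, so minimization collapses them into a single state.

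First I would invoke Proposition \ref{prop:multiplicity} on the triplet $(x', y', z')$. This splits $y'$ into two states, one of which (say $\hat{y}'$) is new: its outgoing transition function agrees with $\delta(y',\cdot)$ on every letter except $b$, where the arc to $z'$ has been removed; the other state inherits all of $y'$'s original behavior and hence is indistinguishable from $y'$. I would do the same on $(x'', y'', z'')$ to produce $\hat{y}''$, which agrees with $y''$ everywhere except that the $b$-arc to $z''$ is gone.

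Next I would establish the key structural claim: the states of $Y$ that descend from the original ancestor $y$ all have the same outgoing behavior on every letter except those involved in merges previously applied to members of the triplet $X \times Y \times Z$, and analogously for $Z$. I would prove this as an invariant maintained by the inductive construction: Proposition \ref{prop:multiplicity} only alters one outgoing arc of the $Y$-ancestor (and redirects incoming arcs), so repeated application preserves the rest of the outgoing behavior. Combining this invariant with Step 1, the remnants $\hat{y}'$ and $\hat{y}''$ agree on every outgoing letter other than $b$. On $b$, either neither has a transition or both go to states in $Z$ which, by the same invariant applied one level down, are themselves indistinguishable. A short Myhill--Nerode argument then shows $\hat{y}'$ and $\hat{y}''$ accept the same suffix language, so minimization identifies them; consequently the two gadget applications contribute at most one new distinguishable state to $Y$, which is exactly what the lemma claims (the ``indistinguishable'' states produced by the two triplets collapse to one).

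The main obstacle is Step 2: pinning down and maintaining the structural invariant that the derived sets $X$, $Y$, $Z$ preserve identical behavior on all ``not-yet-touched'' letters through a sequence of merges. Once this invariant is stated cleanly and shown to be preserved by each single-triplet split of Proposition \ref{prop:multiplicity}, the rest of the argument is a direct Myhill--Nerode comparison. A secondary subtlety is handling the case $y' = y''$, where only one split occurs and the statement is trivial, versus $y' \neq y''$, which is the nontrivial case requiring the invariant and the collapsing argument.
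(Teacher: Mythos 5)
Your proposal inverts the case structure of this lemma, and the inversion is fatal to the key step. In the paper's proof, the only configuration in which the gadget genuinely applies to two distinct triplets is $y' = y''$ with $x' \neq x''$ (the paper's Case 5): there the per-triplet splits $p', p''$ and $q', q''$ are shown indistinguishable by exhibiting identical remapped transition functions, which is exactly the content you dismiss as ``trivial, only one split occurs.'' Conversely, the configuration you treat as the nontrivial heart of the argument, $y' \neq y''$, cannot occur at all: if $x' = x''$, determinism of $\mathcal{A}$ forbids two $a$-arcs out of the same state (the paper's Cases 3--4); and if $x' \neq x''$, the structure of derived states forbids it (Cases 7--8) --- when an earlier merge $G_{(a', b')}$ splits a state of $Y$, the new copy receives only $a'$-labelled incoming arcs while its sibling receives the rest, so for any fixed label $a$ at most one derived state of $Y$ can be entered on $a$, and hence at most one triplet is compatible with $G_{(a, b)}$. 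Most of the paper's proof consists precisely of this determinism and incoming-label analysis ruling out six of the eight equality patterns, and your proposal contains no trace of it.

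Worse, even if the case $y' \neq y''$ did arise, your collapsing argument is refuted by your own invariant. You correctly observe that distinct derived states of $Y$ agree on all outgoing letters \emph{except} those touched by earlier merges --- but those exceptions are exactly why the earlier split produced a new \emph{distinguishable} state in the first place: $y''$ lacks the $b'$-transition that $y'$ retains, so the remnants $\hat{y}'$ and $\hat{y}''$ differ on $b'$ as well as on $b$, their suffix languages differ, and the Myhill--Nerode argument does not identify them. Your claim that the remnants ``agree on every outgoing letter other than $b$'' contradicts the invariant from which you purport to derive it. As written, the proposal proves a false equivalence in a case that is vacuous and skips the actual work; it would need to be restructured along the paper's lines --- an exhaustive case analysis on which of $x' = x''$, $y' = y''$, $z' = z''$ hold, with determinism eliminating most cases and an explicit transition-function comparison handling $x' \neq x''$, $y' = y''$, $z' = z''$ --- to go through.
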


\begin{proof}
We proceed by case analysis and show that for each choice of $x^1, y^1, z^1$ and $x^2, y^2, z^2$ then $G_m$ is either not applicable over the triplet or produces exactly one new indistinguishable state. There are 8 possible cases, depending on if $x^1 \stackrel{?}{=} x^2$, etc. We assume that, if the case is possible, then the appropriate transitions for the merge $(a, b)$ exist, as otherwise it is guaranteed that no new states are added.

    \begin{enumerate}
        \item[\textbf{Case 1): }] $x^1 = x^2, y^1 = y^2, z^1 = z^2$ \\
            This case is covered by Proposition \ref{prop:multiplicity}.
        \item[\textbf{Case 2): }] $x^1 = x^2, y^1 = y^2, z^1 \ne z^2$ \\
            This is not possible, due to determinism ($y^1$ must have two arcs labeled $b$ but which go to different states, which violates the determinism of $\mathcal{A}$).
        \item[\textbf{Case 3): }] $x^1 = x^2, y^1 \ne y^2, z^1 = z^2$ \\
            This is not possible, due to determinism ($x^1$).
        \item[\textbf{Case 4): }] $x^1 = x^2, y^1 \ne y^2, z^1 \ne z^2$ \\
            This is not possible, due to determinism ($x^1$).
        \item[\textbf{Case 5): }] $x^1 \ne x^2, y^1 = y^2, z^1 = z^2$ \\
            Assume that the gadget operating on $x^1, y^1, z^1$ creates two splits $y^1$ into $q'$ and $q''$ while $x^2, y^1, z^1$ splits $y^1$ into $p', p''$. Like Proposition \ref{prop:multiplicity}, the transition functions are remapped like:        
        \begin{itemize}
            \item $\delta(x^1, c) = p', \forall c \in \{c \mid c \in \delta(x^1) \wedge c \ne a \wedge \delta(x^1, c) = y^1$\}
            \item $\delta(x^1, a) = p''$
            \item $\delta(p') = \delta(y^1)$
            \item $\delta(p'', c) = \delta(y^1, c)$, $\forall c \ne b \in \delta(y)^1$

            \item $\delta(x^2, c) = q', \forall c \in \{c \mid c \in \delta(x^2) \wedge c \ne a \wedge \delta(x^2, c) = y^1$\}
            \item $\delta(x^2, a) = q''$
            \item $\delta(q') = \delta(y^1)$
            \item $\delta(q'', c) = \delta(y^1, c)$, $\forall c \ne b \in \delta(y)^1$
        \end{itemize}
        
        Then, by construction $q'$ and $p'$ are indistinguishable and can be merged, and likewise for  $q''$ and $p''$. So since state $y^1$ is removed and replaced with $q'$ and $q''$, only one new state is added.

        \item[\textbf{Case 6): }] $x^1 \ne x^2, y^1 = y^2, z^1 \ne z^2$ \\
            This case is not possible, due to determinism ($y^1$).
        \item[\textbf{Case 7): }] $x^1 \ne x^2, y^1 \ne y^2, z^1 = z^2$ \\
            This case has two subcases. First, consider some merge $G_{(a', b')}$ which operated on some set of states $q, x, p$ to create $x^1$ and $x^2$. Then, $\delta(x^1, b') = \delta(x^2, b')$ for all $b' \ne d$. So if $b \ne d$, this case is impossible.

            Next, assume $b = d$, then WLOG, there cannot be an arc $\delta(x^1, b)$, as it was removed to prevent reading $\texttt{a}'\textvisiblespace \texttt{b}'$ and so this triplet is not compatible with the merge and no new states are added.

            Thus, either this case is not possible.

        \item[\textbf{Case 8): }] $x^1 \ne x^2, y^1 \ne y^2, z^1 \ne z^2$ \\
            This case is not possible, due to a combination of the reasoning of Case 7 and determinism. In short, $y^1 = y^2$ or else this is case is impossible, and if $y^1 = y^2$ then $z^1 \ne z^2$ implies $\mathcal{A}$ is not deterministic.

    \end{enumerate}

    For all 8 cases, we have shown that either a merge is not able to operate on the specified pairs of triplets, or that, regardless of the choice of triplet, only one new indistinguishable state is added.

\end{proof}

\begin{proposition}\label{prop:equivalent_compose}
\[\textsc{Min}(\textsc{Proj}(\mathcal{T}_1 \circ \mathcal{T}_2 \circ \dots \circ \mathcal{T}_n)) = \textsc{Min}(\textsc{Proj}(\dots \textsc{Min}(\textsc{Proj}(\textsc{Min}(\textsc{Proj}(\mathcal{T}_1 \circ \mathcal{T}_2)) \circ \mathcal{T}_3)) \dots \circ \mathcal{T}_n))\]
\end{proposition}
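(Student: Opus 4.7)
I propose to prove \Cref{prop:equivalent_compose} by induction on $n$, reducing the claim to the observation that both sides describe the minimal deterministic automaton of the same regular language. Since the minimal DFA of a regular language is unique up to isomorphism, and $L(\textsc{Min}(A)) = L(A)$ for every automaton $A$, it suffices to show that the two expressions, stripped of their outermost $\textsc{Min}$, accept the same language over $\Gamma^*$.

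\textbf{Key lemma.} The workhorse identity is
\begin{equation}
L(\textsc{Proj}(\textsc{Proj}(\mathcal{S}) \circ \mathcal{T})) = L(\textsc{Proj}(\mathcal{S} \circ \mathcal{T})),
\end{equation}
valid whenever $\mathcal{S}$ has output alphabet matching the input alphabet of $\mathcal{T}$. Unfolding the definitions of $\textsc{Proj}$ and $\circ$ from \Cref{sec:automata_and_transducers}, both sides reduce to
\[\{z \mid \exists x, y : (x, y) \in L(\mathcal{S}) \wedge (y, z) \in L(\mathcal{T})\},\]
provided the automaton $\textsc{Proj}(\mathcal{S})$ is viewed as the transducer with identity transitions $(q, c, c, p)$. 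A second, trivial observation is that $L(\textsc{Proj}(A \circ \mathcal{T}))$ depends only on $L(A)$; in particular, replacing $\textsc{Proj}(\mathcal{S})$ with $\textsc{Min}(\textsc{Proj}(\mathcal{S}))$ --- which recognizes the same language --- leaves the final projection language unchanged.

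\textbf{Induction.} The base case $n = 2$ is trivial, as both sides literally coincide. For the inductive step, let
\[R_{n-1} = \textsc{Min}(\textsc{Proj}(\cdots \textsc{Min}(\textsc{Proj}(\mathcal{T}_1 \circ \mathcal{T}_2)) \cdots \circ \mathcal{T}_{n-1}))\]
denote the iterated construction over the first $n-1$ transducers. The inductive hypothesis gives $L(R_{n-1}) = L(\textsc{Proj}(\mathcal{T}_1 \circ \cdots \circ \mathcal{T}_{n-1}))$. Combining this with the key lemma applied to $\mathcal{S} = \mathcal{T}_1 \circ \cdots \circ \mathcal{T}_{n-1}$ and the second observation above, the iterated expression for $n$ accepts exactly $L(\textsc{Proj}(\mathcal{T}_1 \circ \cdots \circ \mathcal{T}_n))$, which is the language of the left-hand side before its outermost $\textsc{Min}$. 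Canonicity of the minimal DFA then forces isomorphism of the two sides.

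\textbf{Main obstacle.} The only genuinely non-routine point is bookkeeping around the automaton-as-transducer identification used during iterated composition. One must verify that $\textsc{Proj}(\mathcal{S})$, viewed with identity transitions, composes correctly under the $\varepsilon$-transition and $\varphi$-transition semantics of \Cref{alg:generic_composition}, and confirm that any non-determinism introduced by $\textsc{Proj}$ is resolved by the determinization implicit in $\textsc{Min}$ so that uniqueness of the minimal DFA still applies. Both are standard facts from automata theory, but are worth pinning down once explicitly; everything else then follows mechanically from the definitions of $\textsc{Proj}$, $\circ$, and $\textsc{Min}$.
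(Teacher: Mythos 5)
Your proof is correct and follows essentially the same route as the paper, which simply asserts that the identity "follows directly from the definition of composition and projection" and that "minimization is not relevant as it does not change the language." Your induction, the key lemma $L(\textsc{Proj}(\textsc{Proj}(\mathcal{S}) \circ \mathcal{T})) = L(\textsc{Proj}(\mathcal{S} \circ \mathcal{T}))$, and the appeal to canonicity of the minimal DFA are exactly the details the paper leaves implicit.
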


\begin{proof}
    This follows directly from the definition of composition and projection. Minimization is not relevant as it does not change the language of the transducer.
\end{proof}

\begin{proposition}\label{prop:bpe_min}
Let $\mathcal{A}$ be a trim, deterministic automaton and $G_{(a, b)}$ be a BPE merge. Then $\varepsilon\textsc{-Removal}(\textsc{Proj}(\mathcal{A} \circ G_{(a, b)}))$ is deterministic.    
\end{proposition}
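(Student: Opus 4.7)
The plan is to follow the template of the proof of Lemma \ref{lma:proj_determinsitic}. First, I would identify the sources of $\varepsilon$-output transitions in $\mathcal{A} \circ G_{(a,b)}$: the merge gadget contains exactly one such arc, $(q_0, a, \varepsilon, q_1)$, so the $\varepsilon$-transitions in $\textsc{Proj}(\mathcal{A} \circ G_{(a,b)})$ are exactly those of the form $((q, q_0), \varepsilon, (\delta_\mathcal{A}(q, a), q_1))$ for $q \in Q_\mathcal{A}$ with $\delta_\mathcal{A}(q, a)$ defined. Consequently, $\varepsilon$-removal only alters the outgoing transitions of states $(q, q_0)$, into which arcs from $(\delta_\mathcal{A}(q, a), q_1)$ are inherited.

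The core of the proof is a case analysis on the gadget component $g \in \{q_0, q_1, q_2\}$ of each state $(q, g)$ of the composition. At $(q, q_0)$, the direct outgoing labels are $c \in \delta_\mathcal{A}(q) \setminus \{a\}$ from the self-loops ($ab$ is automatically excluded since $ab \notin \Gamma$ before this merge), while the inherited labels come from the outgoing arcs of $(\delta_\mathcal{A}(q, a), q_1)$. By the determinism of $\mathcal{A}$ together with the disjointness of $\{a, ab\}$ from $\delta_\mathcal{A}(q) \setminus \{a\}$, the direct and inherited labels are pairwise distinct. Analogous arguments apply at $(q, q_1)$ (postpone label $a$, resolve label $ab$, and labels contributed through $(q, q_2)$) and at $(q, q_2)$ (labels $c \in \delta_\mathcal{A}(q) \setminus \{a, b, ab\}$, all distinct by the determinism of $\mathcal{A}$).

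The principal obstacle I expect is reconciling the $\varphi$-arc $(q_1, \varphi, a, q_2)$ with the postpone arc $(q_1, a, a, q_1)$, since both nominally output $a$ at $q_1$; a naive projection would then produce two $a$-labeled out-edges from $(q, q_1)$, violating determinism. I plan to resolve this by unpacking the failure-transition semantics of $\varphi$: because $\varphi$ can fire only when the current input matches no other outgoing arc of $q_1$, in composition with the deterministic $\mathcal{A}$ the $\varphi$-arc's effective composed transitions are restricted to those induced by $\mathcal{A}$-arcs labeled $c \in \delta_\mathcal{A}(q) \setminus \{a, b\}$. Tracking the $\varphi$-arc together with the subsequent $q_2 \to q_0$ arc shows that, after $\varepsilon$-removal, the $\varphi$-path at $(q, q_1)$ contributes outgoing transitions labeled by these specific $c$'s (routed through the intermediate state $(q, q_2)$) rather than a duplicated $a$-label, so all out-labels at $(q, q_1)$ remain pairwise distinct and determinism is preserved throughout the automaton.
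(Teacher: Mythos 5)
Your overall strategy matches the paper's: isolate the single $\varepsilon$-output arc $(q_0, a, \varepsilon, q_1)$, verify output-determinism of the three gadget states case by case, observe that $\varepsilon$-removal only grafts the labels $\{a, ab\}$ from $(\delta_\mathcal{A}(q,a), q_1)$ onto $(q, q_0)$, whose direct labels lie in $\Gamma\setminus\{a, ab\}$, and identify the one real obstacle at $q_1$, where the postpone arc $(q_1, a, a, q_1)$ and the failure arc $(q_1, \varphi, a, q_2)$ both emit $a$. The paper resolves that obstacle exactly as you do in the first half of your final paragraph: under $\varphi$-semantics the failure arc is traversable only when the current symbol matches no other arc leaving $q_1$, so it is mutually exclusive with the postpone arc and at most one of the two $a$-emitting transitions is valid from any composed state.

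The one step that does not hold as written is your concluding sentence: the claim that, after $\varepsilon$-removal, the $\varphi$-path out of $(q, q_1)$ contributes transitions labeled by the various $c \in \delta_\mathcal{A}(q)\setminus\{a,b\}$ ``rather than a duplicated $a$-label.'' The failure arc's \emph{output} is $a$, not $\varepsilon$, so it survives $\textsc{Proj}$ as an $a$-labeled arc from $(q, q_1)$ to $(q, q_2)$ and is untouched by $\varepsilon$-removal; the $c$-labeled arcs leave $(q, q_2)$, not $(q, q_1)$. Whenever $q$ has both an $a$-successor and some successor on $c \notin \{a, b\}$ in $\mathcal{A}$, both $a$-labeled arcs are genuinely present in the projected machine, and determinism is rescued only by the mutual-exclusivity argument you state just beforehand (which is the paper's argument), not by any relabeling. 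Delete the ``relabelled as $c$'' framing, keep the exclusivity argument, and your proof coincides with the paper's.
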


\begin{proof}
Only the arc from state $0$ to $1$ has an $\varepsilon$-output. States $0$ and $2$ are output-deterministic, but state $1$ has an $a$ output transition on two arcs. However, by $\varphi$-semantics, only one of those arcs can be taken from any state by definition. Thus, at each state, there is only one valid transition per output symbol. Since $\mathcal{A}$ is also deterministic, each state has a deterministic transition function, up to $\varepsilon$-closure. There are no $\varepsilon$-cycles, and the only $\varepsilon$-transition is between state $0$ and $1$. Since state $0$ does not have $a$ as an output label (but state $1$ does) the $\varepsilon$-closure does not violate determinism.
\end{proof}

\bpesize*

\begin{proof}
Consider any triplet $x, y, z \in Q$, of which there are $|Q|^3$. By Proposition \ref{prop:multiplicity}, Lemma \ref{lma:k_distuingishable}, and Lemma \ref{lma:general_case}, each triplet can only form $k$ new states as a result of composition with $\bigcirc_i^k G_{m_i}$. As a result, the size of $|\mathcal{A}'| \in O(k|\mathcal{A}|^3)$. Proposition \ref{prop:bpe_min} shows that any $\textsc{Min}(\textsc{Proj}(A \circ G_m))$ can be done in polynomial time. Thus, via induction, following Proposition \ref{prop:equivalent_compose}, $\textsc{Min}(\textsc{Proj}(\mathcal{A}' \circ G_{\mu_k}))$ can be computed in $O(k|\mathcal{A}|^3)$ time by iteratively computing $\textsc{Min}(\textsc{Proj}(\dots\textsc{Min}(\textsc{Proj}(\textsc{Min}(\textsc{Proj}(\mathcal{A} \circ G_{m_1})) \circ G_{m_2})) \dots \circ G_{m_k}))$. The maximum size of any intermediate automaton is $O(k|\mathcal{A}^3|)$ and so composition, projection, and minimization can be done in polynomial time.
\end{proof}

\bpesizecorollary*

\begin{proof}
Let $k = |\mu|$. Note $|\mu| \in O(|\Gamma|)$ by definition. Theorem \ref{thm:bpe_complexity} shows that \[\textsc{Min}\left(\textsc{Proj}\left(\mathcal{A} \circ \left( \mathop{\bigcirc}\limits_{(a,b) \in \mu} G_{(a,b)} \right) \right)\right)\] can be computed in $\textsc{Poly}(|\mathcal{A}|, |\mu|, |\Gamma|)$ time. This implies the size of the output must also be polynomial in $|\mathcal{A}|$ and $|\Gamma|$.
\end{proof}

\end{document}